\begin{document}

\title{Local Gradient Regulation Stabilizes Federated Learning under Client Heterogeneity}

\author{
Ping Luo$^{1}$\quad
Jiahuan Wang$^{1}$\quad
Ziqing Wen$^{1}$\quad
Tao Sun$^{1}$\thanks{Corresponding authors. Contributed equally.}\quad
Dongsheng Li$^{1}$\footnotemark[1]
\\[6pt]
\normalsize $^{1}$National Key Laboratory of Parallel and Distributed Computing,\\
\normalsize College of Computer Science and Technology,\\
\normalsize National University of Defense Technology, ChangSha, 410073, China
\\[6pt]
\normalsize
\texttt{luoping@nudt.edu.cn, wangjiahuan@nudt.edu.cn, zqwen@nudt.edu.cn,}\\
\normalsize
\texttt{suntao.saltfish@outlook.com, dsli@nudt.edu.cn}
}

\date{}
\maketitle

\begin{abstract}
Federated learning (FL) enables collaborative model training across distributed clients without sharing raw data, yet its stability is fundamentally challenged by statistical heterogeneity in realistic deployments. Here, we show that client heterogeneity destabilizes FL primarily by distorting local gradient dynamics during client-side optimization, causing systematic drift that accumulates across communication rounds and impedes global convergence. This observation highlights local gradients as a key regulatory lever for stabilizing heterogeneous FL systems. Building on this insight, we develop a general client-side perspective that regulates local gradient contributions without incurring additional communication overhead. Inspired by swarm intelligence, we instantiate this perspective through Exploratory--Convergent Gradient Re-aggregation (ECGR), which balances well-aligned and misaligned gradient components to preserve informative updates while suppressing destabilizing effects. Theoretical analysis and extensive experiments, including evaluations on the LC25000 medical imaging dataset, demonstrate that regulating local gradient dynamics consistently stabilizes federated learning across state-of-the-art methods under heterogeneous data distributions.

\end{abstract}

\section{Introduction}
\label{sec:intro}
Federated Learning (FL)~\citep{mcmahan2017} has emerged as a distributed machine learning paradigm that enables collaborative model training without requiring clients to share their raw data. As data silos and increasingly stringent privacy regulations continue to constrain centralized learning, FL offers an effective solution by keeping sensitive data localized while only exchanging model updates. In recent years, FL has achieved remarkable success across a wide range of domains, including computer vision, natural language processing, and recommender systems~\citep{kairouz2021advances}. In particular, its privacy-preserving nature makes FL highly attractive for medical and healthcare applications, such as cross-institutional medical image analysis~\citep{lee2024international}, electronic health record modeling~\citep{sadilek2021privacy}, and disease risk prediction~\citep{dayan2021federated}, where data sharing is often severely restricted. These advances highlight the practical potential of FL as a foundation for large-scale, privacy-aware intelligent systems.

Despite this promise, FL faces fundamental challenges in realistic settings, most notably the prevalence of statistical heterogeneity across clients~\citep{ma2022state}. In practice, client data are rarely independent and identically distributed (IID), violating a key assumption underlying classical federated optimization algorithms such as FedAvg~\citep{mcmahan2017}. non-IID data distributions can significantly slow convergence, induce training instability, and lead to substantial degradation in model performance~\citep{wang2020tackling}. During local training, heterogeneous data generate client-specific update directions that may deviate markedly from the global optimum~\citep{zhang2021adaptive}. The accumulation of such gradient discrepancies constitutes a primary source of optimization difficulty and ultimately limits the effectiveness and scalability of FL in real-world deployments.

\begin{figure}[ht]
\centering
  \includegraphics[width=0.8\columnwidth]{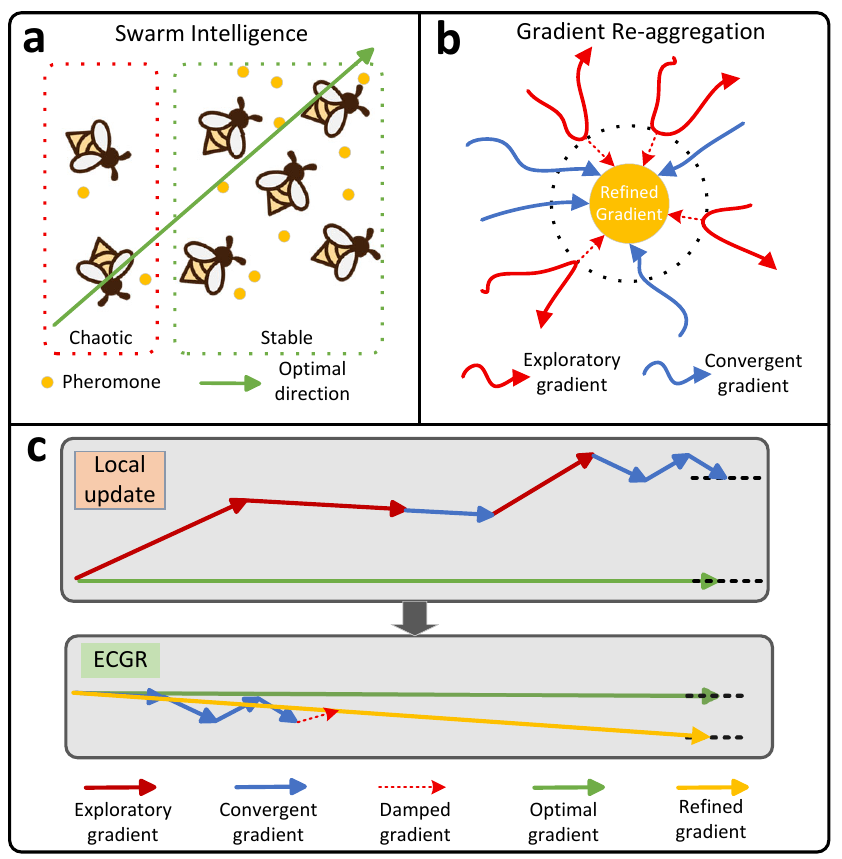}
  \caption{Framework of the \textbf{ECGR} strategy. 
  \textbf{(a)} Illustration of swarm intelligence in honeybees: foraging paths typically consist of both chaotic and stable directions, with the stable direction dominating collective behavior. 
  \textbf{(b)} Inspired by swarm intelligence, local gradients in FL are categorized into \textit{exploratory gradients} and \textit{convergent gradients}, which are re-aggregated such that convergent gradients dominate the resulting update. 
  \textbf{(c)} A two-dimensional visualization of aggregated gradients, illustrating how ECGR reduces gradient deviation induced by data heterogeneity.}
  \label{fig:ECGR}
\end{figure}

In our previous work~\citep{luo2025bherd}, we systematically investigated the optimization behavior of FL under non-IID data and identified a critical mechanism underlying performance degradation. Specifically, statistical heterogeneity manifests itself most directly through its influence on local gradients: non-IID data reshape both the direction and magnitude of client-side updates during local training, thereby inducing pronounced \emph{client drift}. Rather than arising solely from global aggregation, this gradient-level distortion emerges locally and accumulates across training rounds, motivating a re-examination of how local updates are formed before communication.

Guided by this insight, the central idea of the present work is to mitigate client drift by operating directly on local gradients at the client side, without introducing any additional communication overhead or modifying the existing FL protocol. Inspired by swarm intelligence observed in honeybee foraging behavior~\citep{tereshko2005collective,11008686}, we propose a novel gradient re-aggregation strategy termed ECGR (Exploratory--Convergent Gradient Re-aggregation). As illustrated in \cref{fig:ECGR}~(a), although a subset of bees acts as explorers and follows paths that deviate from the optimal route, collective behavior is ultimately governed by bees carrying stable pheromone signals and consistent directional information. Mapping this mechanism to a single client in FL (\cref{fig:ECGR}~(b)), we regard each local gradient as an individual bee, while the information encoded in the gradient corresponds to pheromone signals. Local gradients that deviate substantially from the optimal update direction are identified as \emph{exploratory gradients}; although noisy, they still contain information essential for model convergence. In contrast, gradients that are well aligned with the optimal direction are defined as \emph{convergent gradients} and serve as the dominant contributors to the update. As shown in \cref{fig:ECGR}~(c), ECGR preserves the full contribution of convergent gradients while extracting useful information from exploratory gradients through a damped refinement mechanism. The resulting gradient is rescaled to match the norm of the original local update, yielding a more stable and robust optimization trajectory.

Together, these results establish a general client-side optimization perspective for FL that distills local gradients to mitigate client drift, and demonstrate both theoretically and empirically the effectiveness of ECGR under heterogeneous data distributions.

\section{Related Work}
\label{sec:related}
\noindent \textbf{Federated optimization under non-IID data.}
A substantial body of recent work has focused on mitigating the adverse effects of statistical heterogeneity in federated learning. Early efforts primarily addressed non-IID data by modifying aggregation rules or introducing control variates to correct biased local updates. Representative approaches include FedProx~\citep{li2020federated1}, which constrains local updates through a proximal term, and SCAFFOLD~\citep{karimireddy2020scaffold}, which employs control variates to reduce client drift. Subsequent studies explored adaptive aggregation and normalization strategies, such as FedNova~\citep{wang2020tackling} and FedAvgM~\citep{hsu2019measuring}, to stabilize convergence under heterogeneous data distributions. More recently, personalized and clustered federated learning methods have been proposed to explicitly account for client heterogeneity by learning multiple client-specific or group-level models~\citep{fallah2020personalized,ghosh2022efficient}. While these approaches have demonstrated effectiveness, they typically operate at the level of global aggregation or client participation, leaving the structure of local optimization dynamics largely unexamined.

\noindent \textbf{Leveraging local gradients in federated learning.}
Beyond aggregation-centric strategies, an emerging line of work has investigated how local gradient information can be exploited to improve federated training. Several studies use gradient statistics to guide client selection or weighting, prioritizing clients whose updates are more informative or reliable~\citep{nishio2019client,tang2022fedcor,li2022pyramidfl}. Other works leverage local gradients to identify high-quality or representative data subsets, thereby reducing the impact of noisy or biased local samples~\citep{schutte2024fedgs,9488723}. In addition, gradient-based screening mechanisms have been explored to detect stragglers or anomalous updates in heterogeneous environments~\citep{pillutla2022robust}. These methods demonstrate that local gradients encode rich information about data quality and optimization behavior. However, most existing approaches utilize gradients indirectly---for client or data selection---rather than directly operating on the local gradient set itself. In contrast, a smaller number of recent studies have begun to consider explicit gradient-level manipulation, such as gradient clipping~\citep{zhou2025poster}, filtering~\citep{han2024degafl}, or reweighting~\citep{li2023revisiting}, to improve robustness. Our work aligns with this emerging direction but differs fundamentally in that it performs structured distillation of local gradients at the client side, extracting useful information from noisy updates without discarding them or increasing communication overhead.

\noindent \textbf{Federated learning in computational pathology.}
Computational pathology has emerged as a prominent application domain for federated learning, driven by the sensitivity, scale, and institutional fragmentation of medical imaging data~\citep{adnan2022federated}. Recent studies have demonstrated the feasibility of FL for whole-slide image analysis~\citep{li2021dual}, tumor classification~\citep{al2024brain}, and prognosis prediction~\citep{feng2024robustly,tahir2025federated} across distributed pathology centers. non-IID data are particularly pronounced in this setting due to variations in staining protocols, scanners, patient demographics, and clinical practices~\citep{xiang2023automatic,lu2021data}. To address these challenges, prior work has explored domain adaptation, normalization, and personalized FL strategies tailored to pathology data~\citep{antunes2022federated,lu2022federated}. Nevertheless, optimization instability induced by heterogeneous local gradients remains a critical bottleneck. By directly distilling local gradients before aggregation, the proposed ECGR strategy offers a complementary optimization perspective that is well suited to the intrinsic heterogeneity of computational pathology and other privacy-sensitive medical applications.

\section{Method}
\label{sec:method}
We begin by introducing the relevant definitions and notations for the Federated Averaging (FedAvg)~\citep{mcmahan2017} training process, including both local and global update stages. Building upon these foundations, we propose a new mechanism, termed \emph{ECGR}, which refines the local update strategy. We then formalize its overall workflow and present the corresponding algorithmic design in detail.
  
\subsection{Preliminaries: Federated Averaging (FedAvg)}
\subsubsection{Clients and Datasets.}  
Consider $N$ clients, each associated with a local dataset $\mathcal{D}_{i} \subset \mathcal{D}$ $(i=1,2,\dots,N)$, where $\boldsymbol{x}_{i}\in\mathcal{D}_{i}$ denotes a training sample. The client sampling weights follow the conventional setting in FedAvg, i.e.,  
\begin{equation}
p_i = \frac{|\mathcal{D}_i|}{|\mathcal{D}|}, \quad \text{with } |\mathcal{D}|=\sum_{i=1}^N |\mathcal{D}_i|.
\end{equation}

\subsubsection{Communication Rounds and Local Updates.}  
The FL process proceeds for $T\geq1$ communication rounds, where the server maintains global parameters $\boldsymbol{w}_{t}$ for round $t=0,1,\dots,T$. At each round $t$, client $i$ trains for $E$ local epochs, which correspond to  $\tau_i = E\frac{|\mathcal{D}_i|}{B}$ ($ \tau_{i} \geq 1 $) local SGD iterations with batch size $B$. Let $\boldsymbol{w}_{(t,i)}^{\lambda}$ denote the local model parameters at iteration $\lambda=0,1,\dots,\tau_i$. The local update rule is  
\begin{equation}
\label{eq:local-update}
\boldsymbol{w}_{(t,i)}^{\lambda+1}=\boldsymbol{w}_{(t,i)}^\lambda - \eta_l \nabla F_i(\boldsymbol{w}_{(t,i)}^\lambda;\boldsymbol{x}_{s_i}),
\end{equation}
where $\eta_l$ is the local learning rate, $F_i(\cdot)$ is the local loss function, and $s_i=\{1,2,\dots,\tau_i\}$ denotes the permutation of mini-batches.  

\subsubsection{Local and Global Aggregation Gradients.}  
For each client $i$, the gradients computed on individual mini-batches are first collected and then aggregated to obtain
\begin{equation}
\label{eq:local_grad_set}
\boldsymbol{g}_{(t,s_i)} := \sum 
\underbrace{\left\{ \eta_l \nabla F_i(\boldsymbol{w}_{(t,i)}^\lambda;\boldsymbol{x}_{s_i}) \right\}_{\lambda=1}^{\tau_i}}_{\text{local gradient set}}.
\end{equation}
After locally aggregating the gradients within each local gradient set (typically by averaging), the locally updated training gradient \( \boldsymbol{g}_{(t, s_i)} \) is obtained. It is then transmitted to the parameter server for global aggregation (typically by weighted averaging) as follows:
\begin{equation}
    \boldsymbol{G}_{t} = \sum_{i=1}^N p_i \boldsymbol{g}_{(t,s_i)}
\end{equation}
It should be noted that all the preceding operations are performed on the individual clients, whereas this step and the subsequent global update are carried out on the parameter server.

\subsubsection{Global Update.}
After obtaining the global gradient $\boldsymbol{G}_{t}$ at round $t$, the global model is updated via a straightforward SGD step:
\begin{equation}
\label{eq:global-aggregation}
\boldsymbol{w}_{t+1} = \boldsymbol{w}_{t} - \eta_g \boldsymbol{G}_{t}.
\end{equation}
where $\eta_g$ is the global learning rate. In this paper, we set the local learning rate $\eta_l$ to be the same across all clients, and fix the global learning rate $\eta_g=1$.

\subsection{Exploratory-Convergent Gradient Re‐aggregation (ECGR)}
In our proposed ECGR method, the local gradient set at each client is selectively sampled (with replacement) before local aggregation. The selection strategy consists of three steps:  
\begin{enumerate}
    \item \textbf{Magnitude Ranking}: Select the top half of the local gradients based on their magnitudes, ensuring that the resulting aggregated vector attains the smallest $\ell_2$ discrepancy. These selected gradients are denoted as the \textit{Convergent Gradients}. 
    \item \textbf{Attenuated Extraction}: The remaining gradients after the Magnitude Ranking operation are denoted as the \textit{Exploratory Gradients}. Each of them is multiplied by a damping factor $\beta\in [0,1]$ (typically $\beta=0.1 \sim 0.5$ in experiments) to reduce their influence.  
    \item \textbf{Re-aggregation}: The Convergent and Exploratory Gradients obtained from the previous steps are locally aggregated, and the resulting vector is rescaled to match the $\ell_2$ norm of the original aggregated gradient.  
\end{enumerate}
The formal definitions of each step are provided below.

\subsubsection{Magnitude Ranking} 
We adopt the herding-based greedy strategy from~\cite{lu2022grab} to sequentially sample one half of the gradients from the local gradient set 
$\left\{ \eta_l \nabla F_i(\boldsymbol{w}_{(t,i)}^\lambda;\boldsymbol{x}_{s_i}) \right\}$. 
The selected gradients form a subset 
$\pi_i = \{e_1, e_2, \dots, e_k\}$, 
where $e_\lambda$ denotes the index induced by the permutation $s_i$ after sorting, and 
$k = \left\lfloor \tau_i / 2 \right\rfloor$.

Firstly, let $\boldsymbol{S}_{0} = \boldsymbol{0}$ and $R_{0} = s_i$. At the $\lambda$-th ($\lambda \in [1,k]$) step, we select
\begin{equation}
\label{eq:selection_rule}
    e_{\lambda} = \arg\min_{e_{\lambda} \in R_{\lambda-1}} \, \big\| \boldsymbol{S}_{\lambda-1} + \eta_l \nabla F_i(\boldsymbol{w}_{(t,i)}^{e_{\lambda}};\boldsymbol{x}_{s_i}) \big\|,
\end{equation}
And update
\begin{equation}
    \boldsymbol{S}_{\lambda} := \boldsymbol{S}_{\lambda-1} + \eta_l \nabla F_i(\boldsymbol{w}_{(t,i)}^{e_{\lambda}};\boldsymbol{x}_{s_i}), 
    \qquad 
    R_{\lambda} := R_{\lambda-1} \setminus \{e_{\lambda}\}.
\end{equation}
Finally, we obtain
\begin{equation}
    \pi_i = \arg\min_{\pi_i \subset s_i} \big\| \boldsymbol{g}_{(t,\pi_i)} \big\|,
    \qquad
    \boldsymbol{g}_{(t,\pi_i)} = \sum_{\lambda=1}^{\left\lfloor \tau_i / 2 \right\rfloor} \eta_l \nabla F_i(\boldsymbol{w}_{(t,i)}^\lambda;\boldsymbol{x}_{\pi_i}) .
\end{equation}

In this step, the selected set $\pi_i$ represents the ``convergent'' portion of the client's gradient set, as it contains gradients that are directionally consistent with the global descent trend while filtering out those dominated by local noise or outliers. 
This selection helps stabilize the optimization process, leading to faster global convergence and better generalization.
However, the experimental findings in~\cite{luo2025bherd} suggest that applying this step alone may lead to the loss of beneficial gradient information. Therefore, Attenuated Extraction is further required to extract additional useful gradients.

\subsubsection{Attenuated Extraction}
After obtaining the gradient index set $\pi_i$ through the Magnitude Ranking step, the remaining gradient set can be directly derived as:
\begin{equation}
    \pi_i^{\prime} = s_i\setminus \pi_i,
    \qquad
    \boldsymbol{g}_{(t,\pi_i^{\prime})} = \sum_{\lambda=1}^{\left\lfloor \tau_i / 2 \right\rceil
} \eta_l \nabla F_i(\boldsymbol{w}_{(t,i)}^\lambda;\boldsymbol{x}_{\pi_i^{\prime}}) .
\end{equation}

In contrast, the set $\pi_i^{\prime}$ corresponds to the ``exploratory'' gradients, which include components that may still contribute positively to global convergence but also contain a higher level of stochastic or biased information. 
To balance exploration and stability, these gradients are scaled by a damping factor $\beta \in [0,1]$, which mitigates the influence of potentially harmful updates while retaining the beneficial exploratory directions that enhance model robustness and prevent premature convergence.

\subsubsection{Re-aggregation}
After obtaining the gradient subsets from both Magnitude Ranking $\pi_i$ and Alignment Ranking $\pi_i^{\prime}$, the next step is to combine them to form the re-aggregated gradient. 
\begin{equation}
\label{eq:re_agg}
    \boldsymbol{g}_{(t,s_i)}^{\prime}=\gamma_i(\boldsymbol{g}_{(t,\pi_i)} + \beta \boldsymbol{g}_{(t,\pi_i^{\prime})}),
    \qquad
    \gamma_i = \| \boldsymbol{g}_{(t,s_i)} \|/ \|\boldsymbol{g}_{(t,\pi_i)} + \beta \boldsymbol{g}_{(t,\pi_i^{\prime})} \|
\end{equation}

As shown in \cref{eq:re_agg}, the re-aggregation process balances the ``convergent'' and ``exploratory'' components through the damping factor $\beta \in [0,1]$, producing the refined local update $\boldsymbol{g}_{(t,\pi_i)} + \beta \boldsymbol{g}_{(t,\pi_i^{\prime})}$. 
Here, the scaling coefficient $\gamma_i$ is introduced to ensure that the re-aggregated gradient preserves the same descent magnitude as the original gradient $\boldsymbol{g}_{(t,s_i)}$, while allowing a directional adjustment. 
This design implies that our ECGR method modifies only the aggregation direction of local gradients, rather than their overall update strength, thereby maintaining optimization stability and consistency across clients.

Finally, we plug the re-aggregated local gradients from each client into the standard FedAvg procedure to obtain the global aggregated gradient $\boldsymbol{G}_{t}^{\prime}$ and perform the global model update
$\boldsymbol{G}_{t}^{\prime}$ and perform the global update:
\begin{equation}
   \boldsymbol{G}_{t}^{\prime} = \sum_{i=1}^N p_i \boldsymbol{g}_{(t,s_i)}^{\prime},
   \qquad
   \boldsymbol{w}_{t+1} = \boldsymbol{w}_{t} - \boldsymbol{G}_{t}^{\prime}.
\end{equation}

This step ensures that the proposed ECGR mechanism remains fully compatible with the conventional federated optimization pipeline, introducing no additional synchronization or communication overhead. 
By incorporating directionally refined local updates, the global model is guided toward a more stable and consistent descent trajectory, effectively mitigating the adverse effects of heterogeneous or noisy local data while accelerating convergence across rounds.

\subsection{Algorithm Description}

\SetKwInput{KwData}{Require}
\begin{algorithm}[h]
\setlength{\abovedisplayskip}{2pt}
\setlength{\belowdisplayskip}{2pt}
\setlength{\abovedisplayshortskip}{0pt}
\setlength{\belowdisplayshortskip}{0pt}
\caption{FedAvg-ECGR}
\label{alg:FedAvg-ECGR}
\KwData{Total global round $T$, local dataset $\mathcal{D}_{i}$ ($\boldsymbol{x}_{i}\in\mathcal{D}_{i}$), local iterations $\tau_i$, initialized weight $\boldsymbol{w}_{0}$, initialized order $ s_i $ at client $i$, learning rate $\eta > 0$}
  \For{each round $t=0,\dots,T-1$}
  {
    Parameter server send the global model $\boldsymbol{w}_{t}$ to all participating clients\;
    \For{each client $i=1,...,N$}
    {
      \For{each local iteration $\lambda=0,1,\dots,\tau_i$}
      {
        Initialize the local model $\boldsymbol{w}_{(t,i)}^{\lambda} \leftarrow \boldsymbol{w}_{t}$ \;
        Local update $ \boldsymbol{w}_{(t,i)}^{\lambda+1}=\boldsymbol{w}_{(t,i)}^\lambda - \eta \nabla F_i(\boldsymbol{w}_{(t,i)}^\lambda;\boldsymbol{x}_{s_i}) $ \;
      }
      Store the local gradient set $ \left\{ \eta \nabla F_i(\boldsymbol{w}_{(t,i)}^\lambda;\boldsymbol{x}_{s_i}) \right\}_{\lambda=1}^{\tau_i} $ \;
      \textbf{ECGR}:
          \[
          \begin{split}
          &\pi_i \leftarrow \arg\min_{\pi_i \subset s_i} \big\| \boldsymbol{g}_{(t,\pi_i)} \big\|
          \quad\#\ \text{Magnitude Ranking}
          \\
          &\pi_i^{\prime} = s_i\setminus \pi_i,  \quad \beta \boldsymbol{g}_{(t,\pi_i^{\prime})}
          \quad\#\ \text{Attenuated Extraction}
          \\
          &\boldsymbol{g}_{(t,s_i)}^{\prime}=\gamma_i(\boldsymbol{g}_{(t,\pi_i)} + \beta \boldsymbol{g}_{(t,\pi_i^{\prime})})
          \quad\#\ \text{Re-aggregation}
          \end{split}
          \]
      
    }
    Parameter server receive $\boldsymbol{g}_{(t,s_i)}^{\prime}$ from all clients\;
    Global aggregation $\boldsymbol{G}_{t}^{\prime} = \sum_{i=1}^N p_i \boldsymbol{g}_{(t,s_i)}^{\prime}$\;
    Global update $\boldsymbol{w}_{t+1} = \boldsymbol{w}_{t} - \boldsymbol{G}_{t}^{\prime}$\;
  }
  \Return $\boldsymbol{w}_{T}$\;  
\end{algorithm}

To verify the effectiveness of the proposed \textbf{ECGR} strategy, we incorporate it into the classical \textbf{FedAvg} algorithm, resulting in the \textbf{FedAvg-ECGR} algorithm shown in \cref{alg:FedAvg-ECGR}. This integration enables us to evaluate how ECGR enhances the optimization behavior within the standard FL framework, where a central server coordinates multiple distributed clients to collaboratively train a shared model without exchanging raw data.

In each global communication round, the parameter server transmits the current global model $\boldsymbol{w}_t$ to all participating clients. Each client $i$ performs $\tau_i$ local updates on its private dataset $\mathcal{D}_i$, generating a sequence of local gradients $\{\eta \nabla F_i(\boldsymbol{w}_{(t,i)}^\lambda; \boldsymbol{x}_{s_i})\}_{\lambda=1}^{\tau_i}$ (Lines~1--8 in \cref{alg:FedAvg-ECGR}). 

After completing local training, the client performs the \textbf{ECGR} procedure, which refines the local gradients through a three-step process: 
(1) \textit{magnitude ranking} selects gradients with smaller norms to form subset $\pi_i$, 
(2) \textit{attenuated extraction} scales the complementary subset $\pi_i^{\prime}$ by an attenuation factor $\beta$, 
and (3) \textit{re-aggregation} combines both subsets to yield the adjusted local gradient $\boldsymbol{g}_{(t,s_i)}^{\prime}$ (Lines~9--10 in \cref{alg:FedAvg-ECGR}). 
Each client then transmits $\boldsymbol{g}_{(t,s_i)}^{\prime}$ to the server, which performs weighted aggregation to obtain $\boldsymbol{G}_{t}^{\prime}$ and updates the global model $\boldsymbol{w}_{t+1}$ accordingly (Lines~11--14 in \cref{alg:FedAvg-ECGR}). 
This process repeats until convergence, producing the final global model $\boldsymbol{w}_{T}$.

As illustrated, the baseline FedAvg framework corresponds to Lines~1--7 and Lines~11--15 in \cref{alg:FedAvg-ECGR}. The proposed \textbf{ECGR} mechanism extends this framework by introducing an additional local operation at each client (Lines~8--10), which serves as an effective yet lightweight gradient refinement step.

This modification offers two primary advantages:
\begin{itemize}
    \item \textbf{Communication efficiency.} Compared with FedAvg, \emph{ECGR} incurs no additional communication cost, since each client still uploads only a single aggregated gradient $\boldsymbol{g}_{(t,s_i)}^{\prime}$ to the server. This property is particularly desirable for bandwidth-limited federated environments.
    \item \textbf{Structural compatibility.} \emph{ECGR} maintains the original structure of FedAvg, including both local and global update procedures, ensuring seamless compatibility with existing FL systems based on the FedAvg framework.
\end{itemize}

However, \emph{ECGR} introduces two additional costs.  
First, there is a storage overhead: as shown in Line~8 of \cref{alg:FedAvg-ECGR}, the storage requirement of \emph{ECGR} is approximately $\tau_i$ times that of FedAvg, since all local gradients must be retained for selection rather than discarded after each update.  
Second, there is a computational overhead: the gradient selection in Line~9 has a complexity of $O(\tau_i!)$, slightly higher than that of FedAvg.  

Nevertheless, \emph{ECGR} aligns with the core design principle of federated learning—trading inexpensive local computation and memory for reduced communication cost between clients and the central server. Extensions of ECGR to other state-of-the-art federated learning algorithms, are presented in Appendix \cref{sec:extend-algo}.

\section{Empirical Evaluation}
\label{sec:exp}

In this section, we comprehensively evaluate the effectiveness of the proposed \emph{MAGS} algorithm on several widely used image classification benchmarks and a real-world medical image diagnosis task. We further analyze its performance in comparison with classical and state-of-the-art FL baselines to demonstrate its advantages in both accuracy and stability.
The complete implementation and experimental setup are publicly available at \href{https://github.com/NUDTPingLuo/ECGR}{\tt{https://github.com/NUDTPingLuo/ECGR}} to facilitate reproducibility and future research.

\subsection{Benchmark Image Classification}
\label{sec:benchmark_image_exp}

\begin{figure*}[htbp]
\centering
  \includegraphics[width=1.0\textwidth]{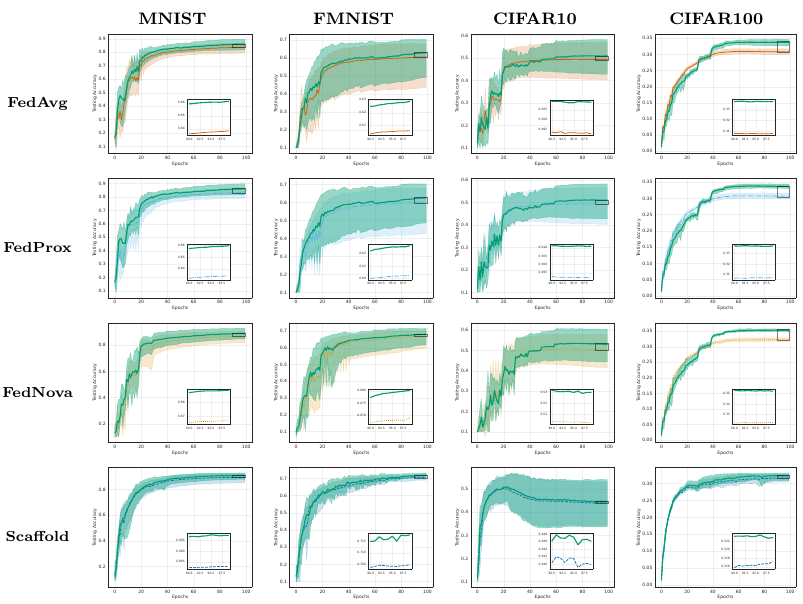}
\caption{Global model testing accuracy curves for different FL algorithms across multiple datasets. Each row corresponds to one algorithm, and each column presents the results on a particular dataset. The green solid line indicates the accuracy trajectory of the ECGR-extended variant, whereas the remaining curves represent the corresponding standard baselines. Each plot shows the mean testing accuracy along with the upper and lower bounds, computed from runs using 5 different random seeds.}
\label{fig:accuracy_curves}
\end{figure*}

\noindent\textbf{Datasets \& model \& settings}.
We conducted experiments on MNIST~\citep{lecun1998gradient}, Fashion-MNIST (FMNIST)~\citep{xiao2017fashion}, CIFAR-10, and CIFAR-100~\citep{krizhevsky2009learning}. 
MNIST and FMNIST contain 60k grayscale training images of size $28\times28$, while CIFAR-10 and CIFAR-100 contain 50k RGB training images of size $32\times32$. 
Each dataset has 10k test images used to evaluate model performance. 

\noindent\textbf{Model}.
For MNIST and FMNIST, we adopt the classical LeNet architecture~\citep{lecun1998gradient}, which consists of two convolutional layers followed by three fully connected layers. 
The forward propagation involves ReLU activations after each convolution and fully connected layer, with max pooling applied after the convolutional layers. 
For CIFAR-10 and CIFAR-100, we use a deeper convolutional neural network (CNN) tailored for $32\times32$ RGB images. 
The network comprises three convolutional blocks with increasing channel dimensions (32$\rightarrow$64$\rightarrow$128$\rightarrow$256), each block containing convolution, batch normalization, ReLU activation, and max pooling layers. 
The final feature maps are flattened and fed into a fully connected layer that outputs class predictions, with a log-softmax function applied at the output for numerical stability.

\noindent\textbf{Settings}.
All experiments were performed on a workstation equipped with an NVIDIA RTX~4070~Ti GPU, simulating 10 federated clients.
Each client trains on its local dataset for one epoch per global round. 
The local training data on each client is sampled from the total training set according to a Dirichlet distribution to simulate extreme non-IID scenarios, with the concentration parameter $\alpha$ set to $0.01$. For each dataset, five distinct random seeds (0, 1, 42, 999, and 2025) are used to generate different Dirichlet partitions, ensuring statistical reliability of the reported results.
A minimum data size of two batches is enforced on each client to ensure that at least two local gradients can be computed for selection.  
All models are trained using stochastic gradient descent (SGD) with a momentum of $0.9$, and the total number of global training rounds is set to $T=100$.
The training hyperparameters are configured consistently across all datasets: the learning rate is initialized at $0.001$ and decays by a factor of two every $10$ rounds, while the batch size is fixed at $128$ for all experiments.

\noindent\textbf{Baselines}.
We compare our method with several representative federated learning baselines, including FedAvg~\citep{mcmahan2017}, FedProx~\citep{li2020federated1}, FedNova~\citep{wang2020tackling} and Scaffold~\citep{karimireddy2020scaffold}. 
FedAvg performs simple model averaging across clients. 
FedProx extends FedAvg by adding a proximal term to the local objective, stabilizing training under statistical heterogeneity. 
FedNova further normalizes local updates to eliminate objective inconsistency caused by varying local epoch numbers. 
Scaffold mitigates client drift in non-IID scenarios by introducing control variates to correct local updates. 
All methods are trained under the same experimental setup for fair comparison, and the corresponding extended variants are provided in \cref{sec:extend-algo}.

\noindent\textbf{Results \& discussions}. 
We adopt a damping factor of $\beta = 0.2$ in the ECGR strategy, and \cref{fig:accuracy_curves} shows the performance on the test datasets.
There are several noteworthy observations: (i)~For the final global model accuracy, ECGR consistently improves performance across all selected datasets and baseline methods, yielding absolute accuracy gains of approximately 1\%--2\% over their corresponding baselines and thereby demonstrating its overall effectiveness. (ii)~ECGR exhibits convergence trajectories that closely align with those of their corresponding baselines, indicating that ECGR refines the optimization process along the original trajectories of each method, in agreement with the procedural logic presented in~\cref{alg:FedAvg-ECGR}.

\subsection{Ablation}
\label{sec:ablation}

\begin{figure}[htbp]
    \centering
    \begin{subfigure}{0.32\textwidth}
        \centering
        \includegraphics[width=\textwidth]{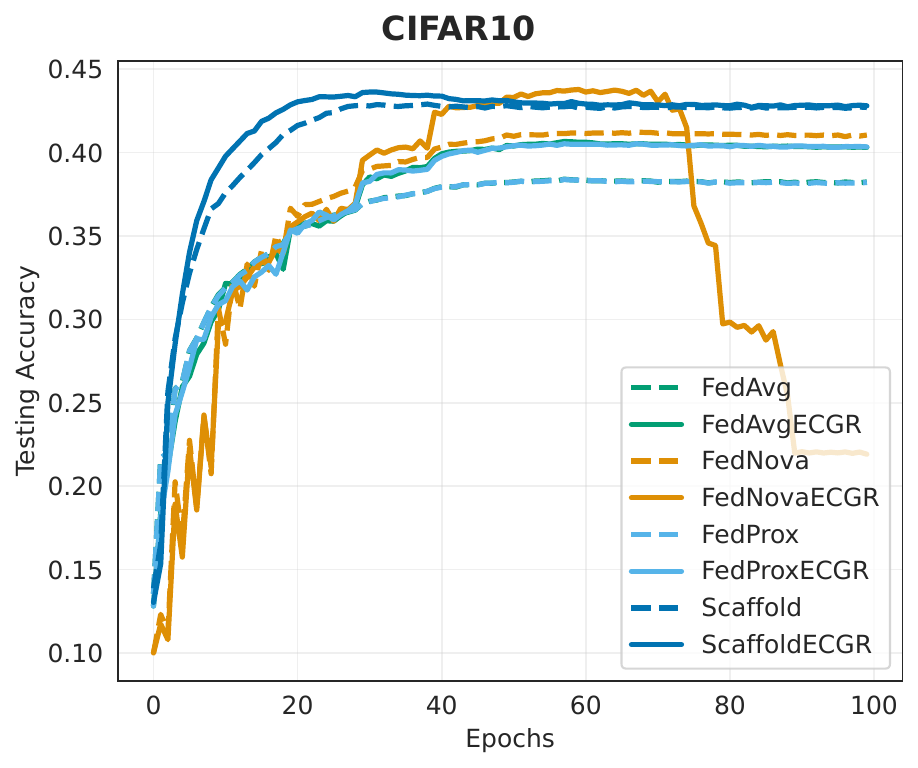}
        \caption{$\eta=0.0001$}
    \end{subfigure}
    \hfill
    \begin{subfigure}{0.32\textwidth}
        \centering
        \includegraphics[width=\textwidth]{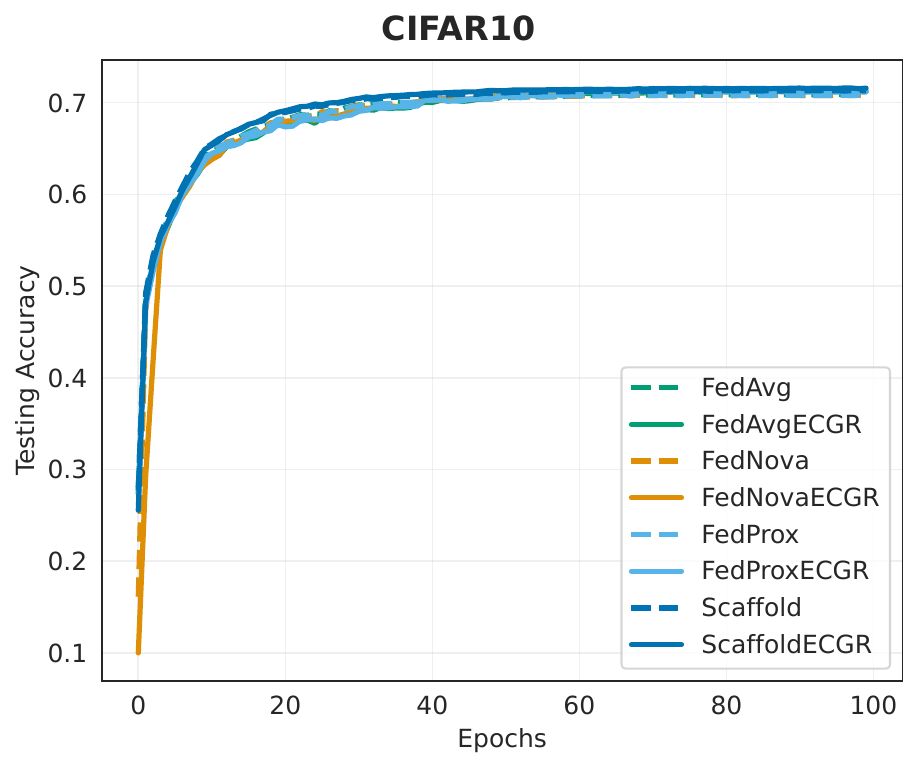}
        \caption{$\alpha=1$}
    \end{subfigure}
    \hfill
    \begin{subfigure}{0.32\textwidth}
        \centering
        \includegraphics[width=\textwidth]{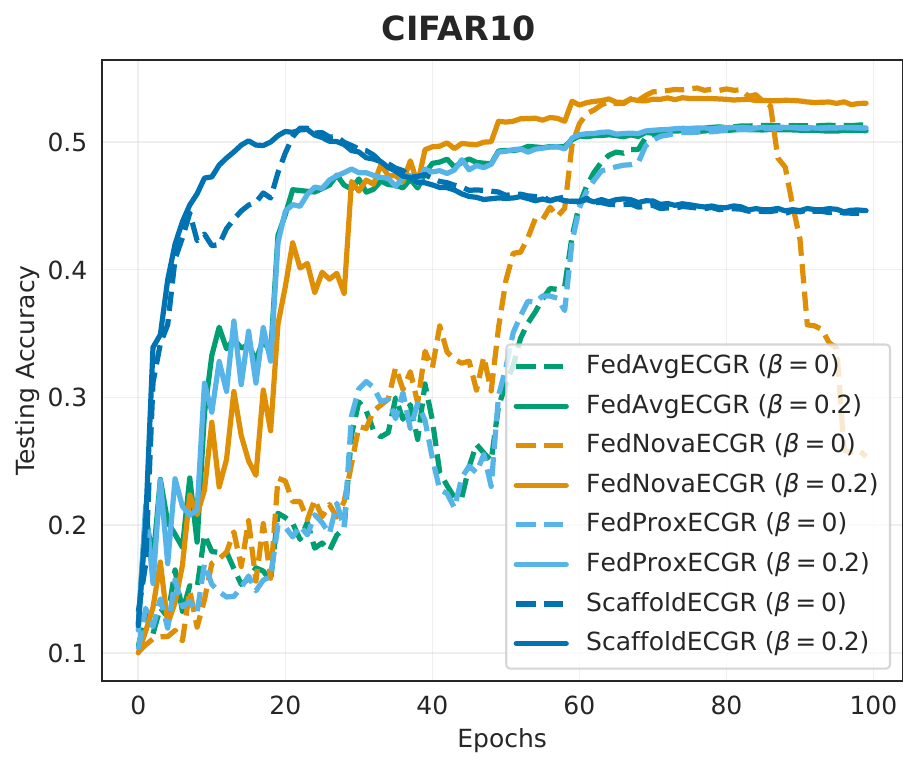}
        \caption{$\beta=0$ vs. $\beta=0.2$}
    \end{subfigure}

    \caption{Ablation studies on CIFAR-10 with respect to learning rate $\eta$, data heterogeneity level $\alpha$, and the ECGR damping coefficient $\beta$. All curves report the mean test accuracy over five independent runs with random seeds 0, 1, 42, 999, and 2025.}
    \label{fig:cifar10_ablation}
\end{figure}

We ablated ECGR to see how different factors affect its performance on the CIFAR-10 dataset. 
Unless otherwise specified, all hyper-parameters are kept consistent with those in~\cref{sec:benchmark_image_exp}, except for the ablation-related settings.
Additional ablation results can be found in Appendix \ref{app:additional_results}.

\noindent\textbf{Learning rate sensitivity.}
As shown in~\cref{fig:cifar10_ablation} (a), the upper bound of the test accuracy achieved by all baselines is consistently lower than that in~\cref{fig:accuracy_curves}, indicating a more challenging optimization regime under this setting.
In this scenario, ECGR still delivers substantial gains in final test accuracy when integrated with FedAvg and FedProx, whereas FedNova exhibits performance degradation due to overshooting the global optimum, and Scaffold-ECGR shows only marginal improvements.
This can be attributed to the fact that FedAvg and FedProx do not explicitly manipulate either local or global gradients, making them inherently compatible with the ECGR mechanism.
In contrast, FedNova applies gradient rescaling, which partially overlaps with the Attenuated Extraction step in ECGR, leading to rapid accuracy decay in the final rounds on CIFAR-10 as the optimization overshoots the global optimum.
Moreover, Scaffold introduces control variates to correct local updates, which interferes with the Magnitude Ranking mechanism of ECGR, thereby resulting in comparatively limited performance gains.

\noindent\textbf{IID versus non-IID}.
As shown in \cref{fig:cifar10_ablation} (b), the convergence curves under the Dirichlet distribution with $\alpha = 1$ are presented, where the local data distributions across clients are close to the IID setting. In this scenario, the convergence behaviors of the baseline methods and their corresponding ECGR-enhanced variants are highly consistent. Compared with the results in \cref{fig:accuracy_curves}, these observations indicate that ECGR preserves the normal training dynamics under IID conditions while effectively improving the convergence performance in non-IID settings.

\noindent\textbf{Discard versus Extraction.}
We further investigate the role of the \textit{Extraction} operation in the proposed ECGR strategy. In ECGR, the Extraction step is a critical component for handling the \textit{exploratory} gradients, and its strength is controlled by the damping factor $\beta$. According to \cref{eq:re_agg}, as $\beta$ approaches $1$, the effect of ECGR gradually diminishes. Therefore, we compare two representative settings: directly discarding the exploratory gradients ($\beta = 0$) and applying \textit{Attenuated Extraction} with a moderate damping factor ($\beta = 0.2$). As shown in \cref{fig:cifar10_ablation} (c), when $\beta = 0$, the test accuracy curves under the ECGR strategy exhibit slower early-stage convergence and larger oscillations. Moreover, FedNova-ECGR tends to overshoot the global optimum, similar to the behavior observed in \cref{fig:cifar10_ablation} (a). These results indicate that the Attenuated Extraction mechanism is essential for improving convergence stability.

\subsection{Visualization of Per-round Gradient Selection on Clients}
\label{sec:visualization}

\begin{figure}[htbp]
\centering
\begin{subfigure}{0.48\textwidth}
    \centering
    \includegraphics[width=\linewidth]{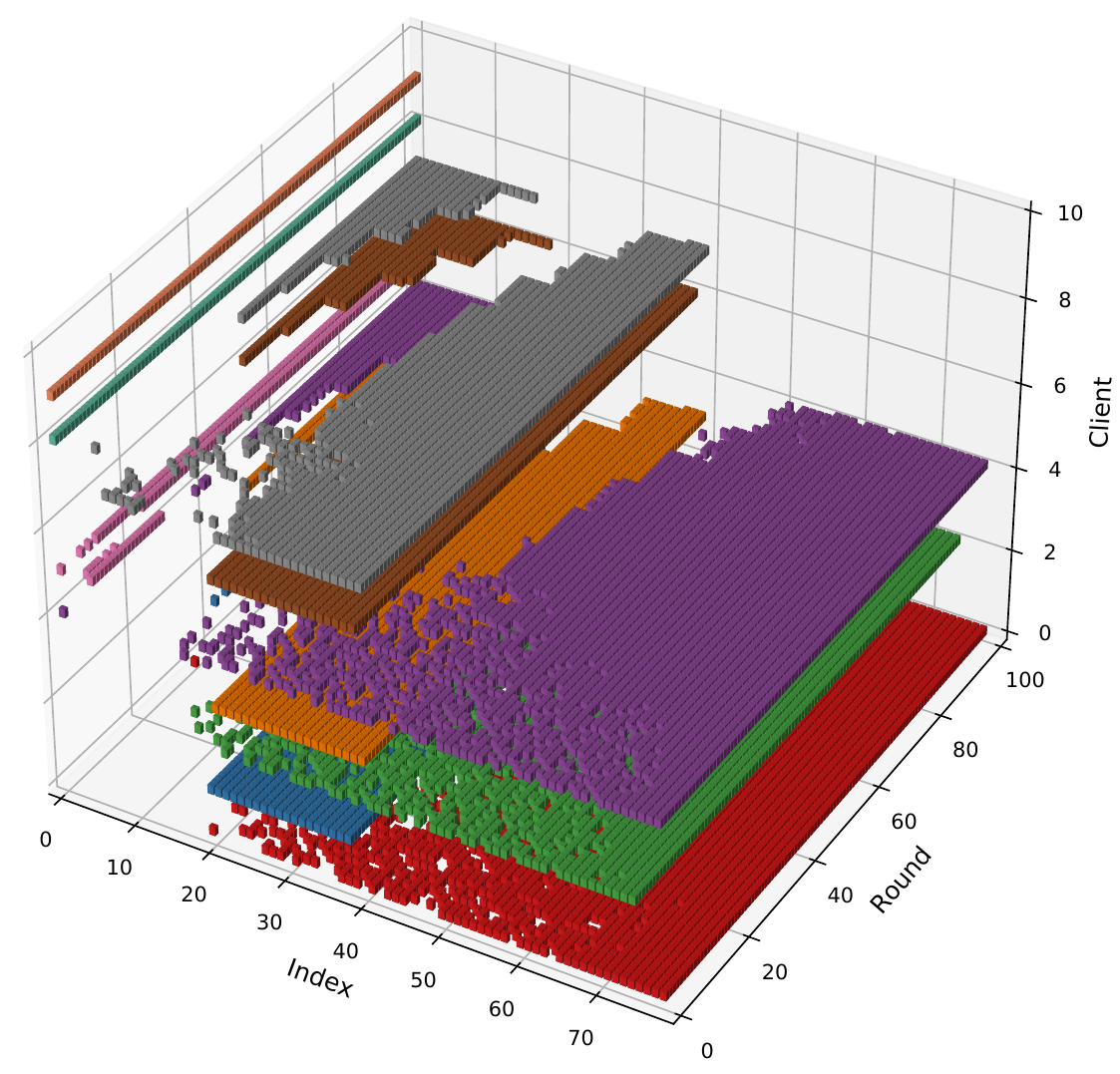}
    \caption{Global 3D overview.}
\end{subfigure}
\hfill
\begin{subfigure}{0.48\textwidth}
    \centering
    \includegraphics[width=\linewidth]{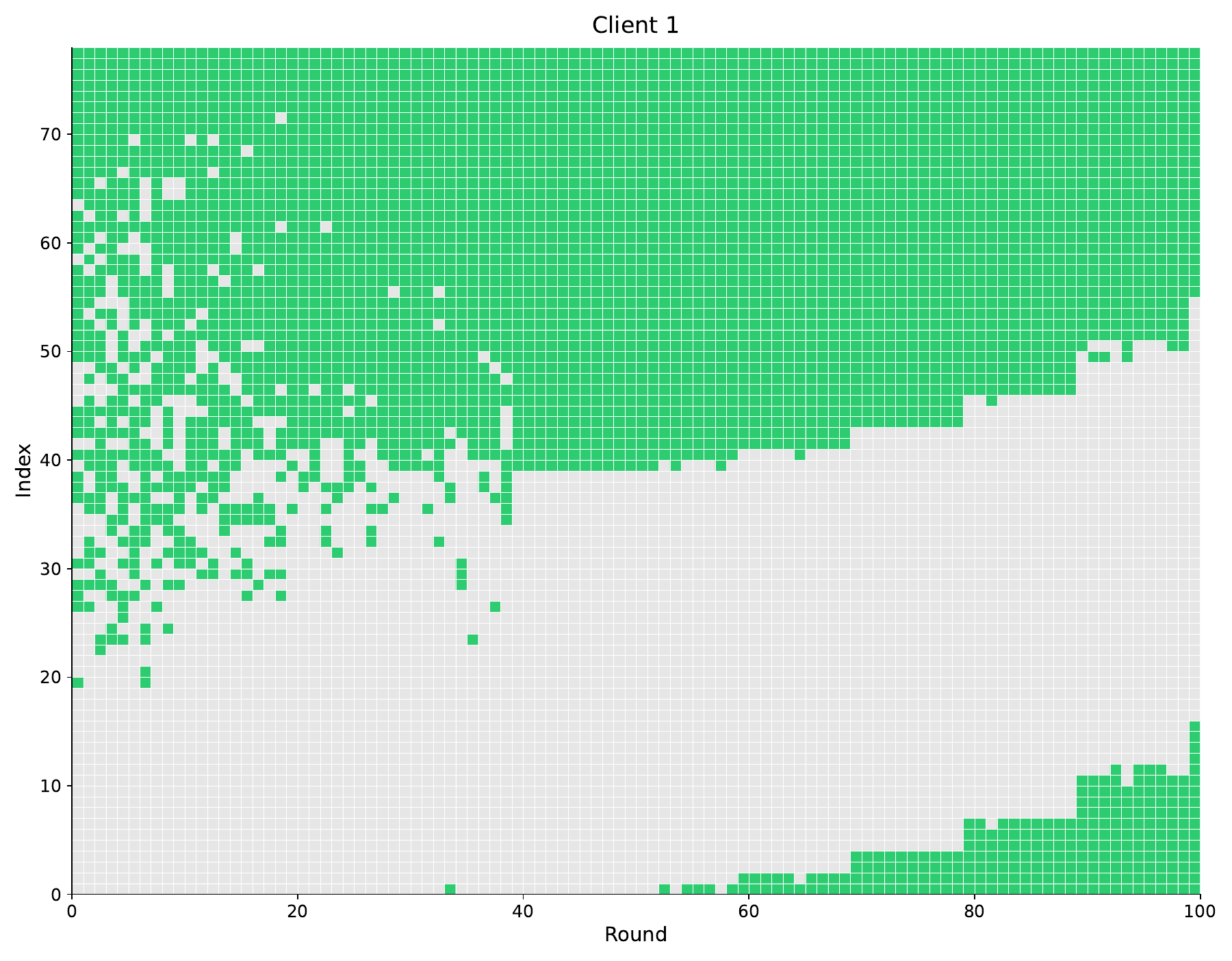}
    \caption{Representative Client~1.}
\end{subfigure}

\caption{Visualization of per-round gradient selection under ECGR on the CIFAR-10 dataset.
 (a)~A global 3D overview illustrating gradient selection patterns across all clients. Each cube represents a selected gradient at a specific index (x-axis) and training round (y-axis), with the client dimension encoded along the z-axis. (b)~A detailed view of the selection behavior for a representative node, Client~1.}
\label{fig:grad_sel_vision}
\end{figure}

\noindent\textbf{Setup}. 
We visualize the \textit{Magnitude Ranking} operation of the proposed ECGR strategy on the CIFAR-10 dataset. 
Specifically, the procedure is conducted as follows: 
(i) the local gradients obtained during client-side training are indexed according to the training order; 
(ii) under the Magnitude Ranking mechanism, the indexed gradients are categorized into \textit{exploratory} and \textit{convergent} gradients; 
(iii) the indices corresponding to the convergent gradients are highlighted and visualized to illustrate the gradient selection behavior of ECGR.

\noindent\textbf{Results \& discussion}. 
The 3D visualization results and the corresponding 2D visualizations representing individual clients are presented in \cref{fig:grad_sel_vision}.
We observe that the convergent gradients typically emerge in the later stages of local training, revealing an insightful phenomenon: during SGD-based optimization, the training process is inherently accompanied by an initial exploratory phase followed by a convergent phase. This behavior closely resembles the swarm intelligence pattern illustrated in \cref{fig:ECGR}. Moreover, these observations further validate the core principle of ECGR, which emphasizes the dominance of convergent gradients while effectively leveraging the information contained in exploratory gradients to enhance the federated learning optimization process.

\subsection{Histopathology Image Analysis}

\begin{figure}[htbp]
    \centering
    \includegraphics[width=0.8\textwidth]{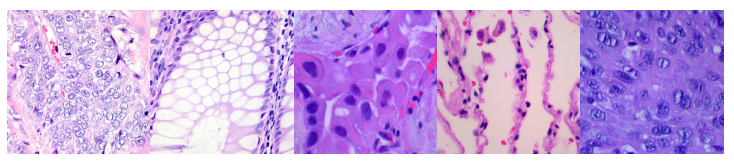}
    \caption{Representative patches from the LC25000 dataset. From left to right: Colon Adenocarcinoma, Colon Benign Tissue, Lung Adenocarcinoma, Lung Benign Tissue, and Lung Squamous Cell Carcinoma.}
    \label{fig:histo_image}
\end{figure}
In this experiment, we evaluated ECGR on the LC25000 dataset~\citep{borkowski2019lung} under a non-IID data distribution setting to better reflect realistic clinical deployment scenarios.

\noindent \textbf{Dataset.} We considered the LC25000 dataset, a publicly available archive of histopathological image patches from colon and lung tissues. The dataset contains a total of 25,000 color image patches, equally distributed among five classes: Colon Adenocarcinoma, Colon Benign Tissue, Lung Adenocarcinoma, Lung Benign Tissue, and Lung Squamous Cell Carcinoma~\citep{borkowski2019lung}. All images are 224$\times$224 pixels in size. For our study, we organized the data by class to create client datasets simulating a federated learning environment. Patches from each class were divided into training and test sets in approximately an 80/20 ratio, ensuring that samples from the same source image were kept in a single split to avoid data leakage. 

\noindent \textbf{Models.} For all methods, we used the standard ResNet-18 neural network architecture \citep{resnet}, as implemented in the torchvision package \citep{torchvision2016}, with randomly initialized weights. 

\noindent \textbf{Experimental setup.} The experimental settings in this section are consistent with those in~\cref{sec:benchmark_image_exp}, except for the damping factor $\beta$ of the \emph{Attenuated Extraction}. Based on the observations in~\cref{sec:ablation}, a larger value of $\beta$ is assigned to \emph{FedNova-ECGR} (i.e., $\beta = 0.5$), while all other ECGR-based baselines adopt a unified setting of $\beta = 0.2$.

\begin{figure*}[t]
    \centering
    \begin{minipage}{0.49\textwidth}
        \centering
        \includegraphics[width=\linewidth]{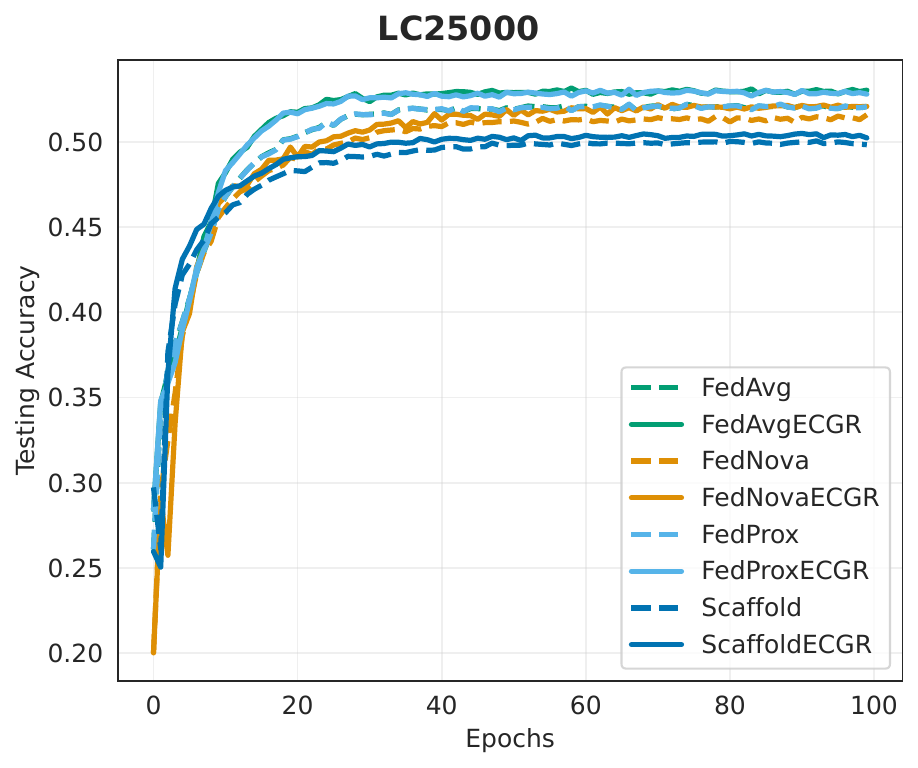}
    \end{minipage}\hfill
    \begin{minipage}{0.49\textwidth}
        \centering
        \begin{tabular}{lcc}
        \hline
        Algorithm & Final Acc & Best Acc \\
        \hline
        FedAvg          & 0.5209 & 0.6150 \\
        \textbf{FedAvg-ECGR}      & \textbf{0.5304} & \textbf{0.6186} \\
        FedNova         & 0.5160 & 0.6351 \\
        \textbf{FedNova-ECGR}     & \textbf{0.5208} & \textbf{0.6399} \\
        FedProx         & 0.5207 & 0.6154 \\
        \textbf{FedProx-ECGR}     & \textbf{0.5281} & \textbf{0.6210} \\
        Scaffold        & 0.4984 & 0.6387 \\
        \textbf{Scaffold-ECGR}    & \textbf{0.5024} & \textbf{0.6415} \\

        \hline
        \end{tabular}
    \end{minipage}
    \caption{FL results on the LC25000 dataset using a ResNet-18 model. 
Left (figure): testing accuracy curves of each baseline and its corresponding ECGR-enhanced variant, averaged over the five seeds defined earlier. 
Right (table): the final-round average testing accuracy of the global model and the maximum accuracy achieved during training for each baseline and its ECGR counterpart.}
\label{fig:LC2500_result}

\end{figure*}

\noindent\textbf{Results.} As illustrated in \cref{fig:LC2500_result}, the proposed ECGR strategy remains effective when applied to medical datasets and large-scale models. Although the performance gains achieved by integrating ECGR into various baselines are relatively modest (ranging from 0.4\% to 1\%), ECGR nonetheless provides a general and practically viable optimization mechanism that consistently enhances federated training across different settings.

\section{Conclusion and Future Work}
\label{sec:conclude}
In this work, we investigated the optimization challenges of federated learning under statistical heterogeneity from a gradient-level perspective. By identifying local gradients as the primary mechanism through which non-IID data induce client drift, we introduced a general client-side optimization framework that operates entirely on local gradient collections without modifying the federated communication protocol or increasing communication overhead. Within this framework, we proposed ECGR, a swarm-intelligence-inspired gradient re-aggregation strategy that decomposes local gradients into exploratory and convergent components and refines their contributions to produce more stable and robust updates. Both theoretical analysis and extensive empirical results demonstrate that ECGR can effectively alleviate client drift and be seamlessly integrated with existing federated learning algorithms.

Our gradient decomposition perspective is closely related to, and supported by, a growing body of prior work that selectively exploits informative components of local updates. For example, AdaComp~\citep{chen2018adacomp} adaptively transmits only the most significant gradient elements to reduce communication while preserving optimization fidelity. Similarly, \citep{sattler2019robust} filters local updates by uploading only gradients with large magnitudes, thereby emphasizing critical updates under heterogeneity. Beyond gradient compression, FedSkip~\citep{fan2022fedskip} decomposes client updates into globally shared and locally specific components, while FedPer~\citep{arivazhagan2019federated} separates neural networks into shared base layers and personalized layers, aggregating only the former across clients. These representative methods, developed from different motivations, collectively suggest that selectively distilling, partitioning, or reweighting local updates is a viable and effective direction for addressing heterogeneity in federated learning, providing independent validation for the central idea explored in this work.

Looking forward, we acknowledge that the gradient partitioning strategy adopted in ECGR represents a coarse instantiation of a broader design space. There likely exists a theoretically optimal way to partition and recombine local gradient collections that more precisely balances stability, bias, and convergence efficiency. An important direction for future research is to formalize this optimality and develop principled mechanisms for gradient decomposition and re-aggregation, with the ultimate goal of closing the performance gap between federated and centralized learning under heterogeneous data. More broadly, we hope that the perspective advanced in this work—viewing local gradients as structured objects rather than indivisible updates—will inspire further investigation across federated optimization, distributed learning, and related fields.

\section*{Data Availability}
All datasets used are publicly available. MNIST~\citep{lecun1998gradient}, Fashion-MNIST~\citep{xiao2017fashion}, CIFAR-10 and CIFAR-100~\citep{krizhevsky2009learning} are commonly used benchmarks for image classification with machine learning. The LC25000 dataset contains a tota of 25,000 color image patches, equally distributed among five classes: Colon Adenocarcinoma, Colon Benign Tissue, Lung Adenocarcinoma, Lung Benign Tissue, and Lung Squamous Cell Carcinoma~\citep{borkowski2019lung}.

\section*{Code Availability}
Python code of the proposed framework has been made available by \href{https://github.com/NUDTPingLuo/ECGR}{Ping Luo} (URL: \\ {https://github.com/NUDTPingLuo/ECGR}).



\bibliographystyle{plainnat}
\bibliography{ref}

\clearpage
\newpage
\appendix
\begin{appendix}
\begin{center}
{\huge Appendix}
\end{center}
\section{Assumptions, Definitions, and Theorem for ECGR Gradient Error Reduction}
We formalize the argument that the ECGR method reduces the discrepancy between the local gradients and the theoretical optimal gradient. The key insight is that ECGR's re-aggregation suppresses local gradient variance while controlling the induced bias, thereby yielding local gradient estimates that are closer to the true optimal update direction.

\subsection{Federated Learning Optimization Objective}
\begin{assumption}[$L$-smoothness]
A differentiable function $F:\mathbb{R}^d \to \mathbb{R}$ is said to be $L$-smooth if its gradient is $L$-Lipschitz continuous, i.e.,
$$
\|\nabla F(\boldsymbol{w}) - \nabla F(\boldsymbol{w}')\| \le L \|\boldsymbol{w}-\boldsymbol{w}'\|, 
\quad \forall \boldsymbol{w},\boldsymbol{w}' \in \mathbb{R}^d.
$$
\end{assumption}

\begin{theorem} \label{theo:mags-convergence} 
Under Assumption 1 and Definition 1 with $L > 1$, the optimization objective of FedAvg can be quantitatively characterized as minimizing the deviation between local gradients $\boldsymbol{g}_{(t,s_i)}$ and the expected global gradient $\nabla F(\boldsymbol{w}_t)$ for all clients $i$.
\end{theorem}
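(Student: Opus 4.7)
The plan is to reduce the per-round progress of FedAvg, measured by the decrease in the global objective $F(\boldsymbol{w}_{t}) = \sum_{i=1}^{N} p_{i} F_{i}(\boldsymbol{w}_{t})$, to a quantity that is controlled entirely by the per-client deviations $\|\boldsymbol{g}_{(t,s_i)} - \nabla F(\boldsymbol{w}_{t})\|$. First, I would invoke Assumption 1 together with the update rule $\boldsymbol{w}_{t+1} = \boldsymbol{w}_{t} - \boldsymbol{G}_{t}$ from \cref{eq:global-aggregation} (with $\eta_{g} = 1$) to obtain the standard descent inequality
\begin{equation*}
F(\boldsymbol{w}_{t+1}) \le F(\boldsymbol{w}_{t}) - \langle \nabla F(\boldsymbol{w}_{t}),\, \boldsymbol{G}_{t}\rangle + \tfrac{L}{2}\|\boldsymbol{G}_{t}\|^{2}.
\end{equation*}

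Next, I would introduce the aggregation error $\boldsymbol{e}_{t} := \boldsymbol{G}_{t} - \nabla F(\boldsymbol{w}_{t}) = \sum_{i=1}^{N} p_{i}\bigl(\boldsymbol{g}_{(t,s_i)} - \nabla F(\boldsymbol{w}_{t})\bigr)$, substitute $\boldsymbol{G}_{t} = \nabla F(\boldsymbol{w}_{t}) + \boldsymbol{e}_{t}$ into both the inner product and the squared norm of the descent inequality, and collect terms. A single application of Young's inequality to the cross term $\langle \nabla F(\boldsymbol{w}_{t}), \boldsymbol{e}_{t}\rangle$ with a weight chosen as a function of $L$ yields a bound of the form
\begin{equation*}
F(\boldsymbol{w}_{t+1}) - F(\boldsymbol{w}_{t}) \le -\kappa_{1}(L)\,\|\nabla F(\boldsymbol{w}_{t})\|^{2} + \kappa_{2}(L)\,\|\boldsymbol{e}_{t}\|^{2},
\end{equation*}
with positive constants $\kappa_{1}(L), \kappa_{2}(L)$. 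Finally, Jensen's inequality applied to the convex combination defining $\boldsymbol{e}_{t}$ gives $\|\boldsymbol{e}_{t}\|^{2} \le \sum_{i=1}^{N} p_{i}\,\|\boldsymbol{g}_{(t,s_i)} - \nabla F(\boldsymbol{w}_{t})\|^{2}$, so the per-round loss decrease is dominated by exactly the per-client deviation quantity that the theorem singles out.

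Telescoping over $t = 0,1,\dots,T-1$ and rearranging would then produce a characterization of the form
\begin{equation*}
\tfrac{1}{T}\sum_{t=0}^{T-1}\|\nabla F(\boldsymbol{w}_{t})\|^{2} \le \frac{F(\boldsymbol{w}_{0}) - F^{\ast}}{\kappa_{1}(L)\, T} + \frac{\kappa_{2}(L)}{\kappa_{1}(L)} \cdot \frac{1}{T}\sum_{t=0}^{T-1}\sum_{i=1}^{N} p_{i}\,\|\boldsymbol{g}_{(t,s_i)} - \nabla F(\boldsymbol{w}_{t})\|^{2},
\end{equation*}
which is the quantitative restatement asked for: the FedAvg objective progress is, up to constants depending only on $L$, driven by the deviation between local gradients $\boldsymbol{g}_{(t,s_i)}$ and the expected global gradient $\nabla F(\boldsymbol{w}_{t})$.

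The main obstacle I expect lies in keeping both $\kappa_{1}(L)$ and $\kappa_{2}(L)$ strictly positive once $L > 1$: the naive coefficient $1 - L/2$ in front of $\|\nabla F(\boldsymbol{w}_{t})\|^{2}$ turns non-positive for $L \ge 2$, so the Young-inequality weight must be tuned as an $L$-dependent quantity rather than a fixed constant, and one may need to absorb the effective local learning rate $\eta_{l}$ from \cref{eq:local-update} into a rescaled smoothness constant so that the descent inequality retains a useful sign. A secondary subtlety, which I would resolve at the outset, is reconciling my decomposition with the precise formulation of ``expected global gradient'' fixed in Definition 1 (referenced but not reproduced in the excerpt): this determines whether the deviation must be interpreted deterministically or as a conditional expectation over mini-batch sampling, and correspondingly whether Jensen's inequality is applied before or after taking expectations.
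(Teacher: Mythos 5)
Your skeleton (descent lemma, relate $\boldsymbol{G}_t$ to $\nabla F(\boldsymbol{w}_t)$, Jensen over clients) matches the paper's, but the middle step is where your route genuinely breaks. You decompose $\boldsymbol{G}_t = \nabla F(\boldsymbol{w}_t) + \boldsymbol{e}_t$ and aim for $F(\boldsymbol{w}_{t+1}) - F(\boldsymbol{w}_t) \le -\kappa_1(L)\|\nabla F(\boldsymbol{w}_t)\|^2 + \kappa_2(L)\|\boldsymbol{e}_t\|^2$ with $\kappa_1(L)>0$. Expanding the descent inequality under this substitution already produces the coefficient $\tfrac{L}{2}-1$ on $\|\nabla F(\boldsymbol{w}_t)\|^2$, and any Young weight applied to the cross term $(L-1)\langle\nabla F(\boldsymbol{w}_t),\boldsymbol{e}_t\rangle$ can only \emph{add} a nonnegative multiple of $\|\nabla F(\boldsymbol{w}_t)\|^2$; hence no choice of weight yields $\kappa_1(L)>0$ once $L\ge 2$. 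You flag this obstacle yourself, but neither of your proposed remedies resolves it in the paper's setting, where $\eta_g=1$ is fixed and the theorem explicitly assumes $L>1$ (so $L\ge2$ is squarely in scope). The telescoped rate bound you end with is therefore a stronger claim than can be established by this argument.

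The paper sidesteps the issue by not attempting a descent bound at all. It replaces your Young step with the exact polarization identity $-\langle a,b\rangle = -\tfrac12\|a\|^2-\tfrac12\|b\|^2+\tfrac12\|a-b\|^2$, giving
\begin{equation*}
F(\boldsymbol{w}_{t+1}) \le F(\boldsymbol{w}_t) - \tfrac12\|\nabla F(\boldsymbol{w}_t)\|^2 + \tfrac12\|\boldsymbol{G}_t-\nabla F(\boldsymbol{w}_t)\|^2 + \tfrac{L-1}{2}\|\boldsymbol{G}_t\|^2,
\end{equation*}
so the gradient-norm coefficient is exactly $-\tfrac12$ regardless of $L$, and the residual $\tfrac{L-1}{2}\|\boldsymbol{G}_t\|^2$ is \emph{retained} as a third term rather than absorbed. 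The hypothesis $L>1$ is used precisely to make that coefficient positive so that Jensen's inequality can be applied to both the deviation term and the magnitude term, yielding per-client quantities $\|\boldsymbol{g}_{(t,s_i)}-\nabla F(\boldsymbol{w}_t)\|^2$ and $\|\boldsymbol{g}_{(t,s_i)}\|^2$. The ``characterization'' in the theorem is then read off from this upper bound: shrink the deviation while controlling the magnitude. The retained magnitude term is not an artifact to be eliminated --- it is what motivates the norm-preservation property of ECGR proved in the paper's Theorem~2, a structural point your two-term bound would erase.
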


\begin{proof}
At round $t+1$, by the standard $L$-smoothness inequality, the global loss function $F(\boldsymbol{w}_{t+1})$ under the FedAvg method can be expanded via Taylor's theorem as
\begin{align*}
F(\boldsymbol{w}_{t+1}) 
&\le F(\boldsymbol{w}_{t}) - \langle \nabla F(\boldsymbol{w}_{t}), \boldsymbol{w}_{t+1}-\boldsymbol{w}_t \rangle + \frac{L}{2}\|\boldsymbol{w}_{t+1}-\boldsymbol{w}_t\|^2 \\
&= F(\boldsymbol{w}_{t}) - \langle \nabla F(\boldsymbol{w}_{t}), \boldsymbol{G}_{t} \rangle + \frac{L}{2}\|\boldsymbol{G}_{t}\|^2 \\
&= F(\boldsymbol{w}_t) - \frac12 \|\nabla F(\boldsymbol{w}_t)\|^2 - \frac1 2 \|\boldsymbol{G}_{t}\|^2 + \frac12 \|\boldsymbol{G}_{t} - \nabla F(\boldsymbol{w}_t)\|^2 + \frac{L}{2}\|\boldsymbol{G}_{t}\|^2 \\
&= F(\boldsymbol{w}_t) - \frac12 \|\nabla F(\boldsymbol{w}_t)\|^2 + \frac1 2 \|\boldsymbol{G}_{t} - \nabla F(\boldsymbol{w}_t)\|^2 + \frac{L-1}{2} \|\boldsymbol{G}_{t}\|^2 \\
&\leq F(\boldsymbol{w}_t) - \frac12 \|\nabla F(\boldsymbol{w}_t)\|^2 + \frac1 2 \sum_{i=1}^N p_i \|\boldsymbol{g}_{(t,s_i)} - \nabla F(\boldsymbol{w}_t)\|^2 + \frac{L-1}{2} \sum_{i=1}^N p_i \|\boldsymbol{g}_{(t,s_i)}\|^2
\end{align*}

The last inequality follows from Jensen's inequality. 
The above expression establishes an upper bound on the loss function at round $(t\!+\!1)$. 
According to Assumption~1, $\nabla F(\boldsymbol{w}_{t})$ depends only on the entire dataset $D$ and the current global model $\boldsymbol{w}_{t}$, and thus remains a fixed but unknown value during round $t$. 
To tighten this upper bound, it is necessary to minimize the terms $\|\boldsymbol{g}_{(t,s_i)} - \nabla F_i(\boldsymbol{w}_t)\|^2$ and $\|\boldsymbol{g}_{(t,s_i)}\|^2$. 
By definition of the $\ell_2$ norm, the first term measures the deviation between the local gradient and the expected gradient, while the second term reflects the magnitude of the local gradient. These two quantities are inherently coupled. 

Therefore, a simple and effective approach is to preserve the magnitude of each local gradient while reducing its deviation from the expected gradient, thereby improving the stability and convergence of the overall optimization process.

\end{proof}

\subsection{Preservation of Local Gradient Magnitude}

\begin{theorem}[Gradient Magnitude Preservation]
\label{theo:ecgr-magnitude}
Given the re-aggregation formulation of ECGR as 
$\boldsymbol{g}_{(t,s_i)}^{\prime}=\gamma_i(\boldsymbol{g}_{(t,\pi_i)} + \beta \boldsymbol{g}_{(t,\pi_i^{\prime})})$, 
where $\gamma_i = \|\boldsymbol{g}_{(t,s_i)}\| / \|\boldsymbol{g}_{(t,\pi_i)} + \beta \boldsymbol{g}_{(t,\pi_i^{\prime})}\|$, 
the magnitude of the re-aggregated local gradient remains identical to that of the original local gradient:
$$
\begin{aligned}
    \|\boldsymbol{g}_{(t,s_i)}^{\prime}\|^2 &= \|\gamma_i(\boldsymbol{g}_{(t,\pi_i)} + \beta \boldsymbol{g}_{(t,\pi_i^{\prime})})\|^2 \\
    &= \|(\boldsymbol{g}_{(t,\pi_i)} + \beta \boldsymbol{g}_{(t,\pi_i^{\prime})})\|^2 
       \frac{\|\boldsymbol{g}_{(t,s_i)} \|^2}{\|\boldsymbol{g}_{(t,\pi_i)} + \beta \boldsymbol{g}_{(t,\pi_i^{\prime})} \|^2} \\
    &= \|\boldsymbol{g}_{(t,s_i)} \|^2.
\end{aligned}
$$
\end{theorem}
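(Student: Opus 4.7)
The plan is to verify the identity by a direct algebraic computation that exploits only the absolute homogeneity of the Euclidean norm together with the explicit definition of the rescaling coefficient $\gamma_i$. Because the claim is an equality between two squared norms and $\gamma_i$ is defined precisely to enforce this equality, no inequalities or approximations will be required; the argument is essentially a one-line consistency check.

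First, I would observe that $\gamma_i$ is a nonnegative scalar, so by homogeneity $\|\gamma_i \boldsymbol{v}\|^2 = \gamma_i^2 \|\boldsymbol{v}\|^2$ for any $\boldsymbol{v} \in \mathbb{R}^d$. Applying this with $\boldsymbol{v} = \boldsymbol{g}_{(t,\pi_i)} + \beta \boldsymbol{g}_{(t,\pi_i^{\prime})}$ lets me pull the scalar factor out of the squared norm on the left-hand side. Second, I would substitute the definition $\gamma_i = \|\boldsymbol{g}_{(t,s_i)}\| / \|\boldsymbol{g}_{(t,\pi_i)} + \beta \boldsymbol{g}_{(t,\pi_i^{\prime})}\|$, at which point $\gamma_i^2$ contributes a factor $\|\boldsymbol{g}_{(t,\pi_i)} + \beta \boldsymbol{g}_{(t,\pi_i^{\prime})}\|^{-2}$ that exactly cancels the squared norm extracted in the previous step, leaving $\|\boldsymbol{g}_{(t,s_i)}\|^2$ as required.

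The only technical subtlety worth addressing is well-definedness of $\gamma_i$: the denominator $\|\boldsymbol{g}_{(t,\pi_i)} + \beta \boldsymbol{g}_{(t,\pi_i^{\prime})}\|$ could in principle vanish. I would therefore note that if this denominator is zero then the combined gradient is identically zero, and one can extend the definition consistently by setting $\boldsymbol{g}_{(t,s_i)}^{\prime} := \boldsymbol{0}$. In that degenerate case $\boldsymbol{g}_{(t,s_i)}$ itself also satisfies $\|\boldsymbol{g}_{(t,s_i)}^{\prime}\| = 0$ on the convention that both sides coincide, so the identity still holds trivially (and the construction never encounters this case in practice when at least one mini-batch produces a nontrivial gradient).

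Given the elementary nature of the argument, there is no genuine obstacle; the only thing to be careful about is notational bookkeeping, specifically making sure that the scalar $\gamma_i^2$ is pulled out \emph{before} substitution and that the square in the denominator of $\gamma_i$ matches the square on the norm it is meant to cancel. I would present the verification in a single chain of three equalities mirroring the decomposition above: homogeneity, substitution of $\gamma_i$, and cancellation, exactly as indicated in the statement of the theorem.
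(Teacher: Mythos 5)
Your proposal is correct and follows essentially the same route as the paper, which proves the identity by the same three-step chain (pull out $\gamma_i^2$ by homogeneity, substitute the definition of $\gamma_i$, cancel the squared norm). Your additional remark on the degenerate case $\|\boldsymbol{g}_{(t,\pi_i)} + \beta \boldsymbol{g}_{(t,\pi_i^{\prime})}\| = 0$ is a small but worthwhile addition that the paper omits.
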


This theorem shows that ECGR preserves the magnitude (i.e., the “length”) of each local aggregated gradient, ensuring that the optimization dynamics of FedAvg are not distorted.

\subsection{Error Bound Reduction of ECGR}

In this subsection, we demonstrate that ECGR reduces the deviation of local gradients from the global true gradient, i.e., 
$\|\boldsymbol{g}_{(t,s_i)}' - \nabla F(\boldsymbol{w}_t)\|^2 < \|\boldsymbol{g}_{(t,s_i)} - \nabla F(\boldsymbol{w}_t)\|^2$. 
This result requires several additional assumptions, precise definitions, and intermediate lemmas, which are introduced and proved below.

\begin{definition}[Gradient Notation]
To simplify the analysis, we introduce the following definitions.
\begin{itemize}
    \item $\boldsymbol{a} = \boldsymbol{g}_{(t,\pi_i)}$: convergent gradients
    \item $\boldsymbol{b} = \boldsymbol{g}_{(t,\pi_i')}$: exploratory gradients  
    \item $\boldsymbol{\mu} = \nabla F(\boldsymbol{w}_t)$: true gradient
    \item $\boldsymbol{c} = \boldsymbol{a} + \boldsymbol{b}$: original aggregated gradient
    \item $\boldsymbol{v} = \boldsymbol{a} + \beta\boldsymbol{b}$: ECGR combined gradient
    \item $\gamma = \frac{\|\boldsymbol{c}\|}{\|\boldsymbol{v}\|}$: scaling factor
\end{itemize}
\end{definition}

\begin{definition}[Directional Consistency]
For any vectors $\boldsymbol{x}, \boldsymbol{z}$, define the directional consistency function:
$$
\text{Align}(\boldsymbol{x}, \boldsymbol{z}) = \frac{\langle\boldsymbol{x}, \boldsymbol{z}\rangle}{\|\boldsymbol{x}\|\|\boldsymbol{z}\|}
$$
\end{definition}

\subsubsection{Key Assumption}

\begin{assumption}[Convergent Gradient Superiority]
\label{assump:convergent_superiority}
$$
\theta_a = \angle(\boldsymbol{a}, \boldsymbol{\mu}) < \theta_b = \angle(\boldsymbol{b}, \boldsymbol{\mu})
$$
Equivalently, $\text{Align}(\boldsymbol{a}, \boldsymbol{\mu}) > \text{Align}(\boldsymbol{b}, \boldsymbol{\mu})$
\end{assumption}

\subsubsection{Core Lemma and Detailed Proof}

\begin{lemma}[Directional Consistency Monotonicity Lemma]
\label{lemma:monotonicity}
For any vectors $\boldsymbol{x}, \boldsymbol{y}, \boldsymbol{z}$, if:
$$
\frac{\langle\boldsymbol{x}, \boldsymbol{z}\rangle}{\|\boldsymbol{x}\|} > \frac{\langle\boldsymbol{y}, \boldsymbol{z}\rangle}{\|\boldsymbol{y}\|}
$$
then the function:
$$
f(\beta) = \frac{\langle\boldsymbol{x} + \beta\boldsymbol{y}, \boldsymbol{z}\rangle}{\|\boldsymbol{x} + \beta\boldsymbol{y}\|}
$$
is strictly decreasing on $[0,1]$ for $\beta < 1$.
\end{lemma}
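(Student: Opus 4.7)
My plan is to prove the lemma by direct differentiation of $f(\beta)$, reducing strict monotonicity to the sign of a single affine function of $\beta$. Introducing the shorthands $A=\langle\boldsymbol{x},\boldsymbol{z}\rangle$, $B=\langle\boldsymbol{y},\boldsymbol{z}\rangle$, $P=\|\boldsymbol{x}\|^2$, $Q=\langle\boldsymbol{x},\boldsymbol{y}\rangle$, and $R=\|\boldsymbol{y}\|^2$, the function reads
\[
f(\beta)=\frac{A+\beta B}{\sqrt{P+2\beta Q+\beta^{2} R}}.
\]
Applying the quotient rule and clearing the common factor $\|\boldsymbol{x}+\beta\boldsymbol{y}\|$, a short computation should collapse the numerator of $f'(\beta)$ to the affine function
\[
N(\beta)=(BP-AQ)+\beta(BQ-AR),
\]
while the denominator $\|\boldsymbol{x}+\beta\boldsymbol{y}\|^{3}$ is strictly positive throughout $[0,1)$ whenever $\boldsymbol{x}+\beta\boldsymbol{y}\neq\boldsymbol{0}$. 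Proving the lemma therefore reduces to showing $N(\beta)<0$ on $[0,1)$.

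The next step is to tie $N$ back to the hypothesis. I would rewrite the assumption $\langle\boldsymbol{x},\boldsymbol{z}\rangle/\|\boldsymbol{x}\|>\langle\boldsymbol{y},\boldsymbol{z}\rangle/\|\boldsymbol{y}\|$ equivalently as $A\sqrt{R}>B\sqrt{P}$, i.e.\ $\cos\theta_{x}>\cos\theta_{y}$ with $\theta_{x}=\angle(\boldsymbol{x},\boldsymbol{z})$ and $\theta_{y}=\angle(\boldsymbol{y},\boldsymbol{z})$. Since $N$ is affine in $\beta$, it is enough to check $N(0)\leq 0$ and $N(1)<0$. A clean way to organize this is to parameterize trigonometrically via $A=\|\boldsymbol{x}\|\|\boldsymbol{z}\|\cos\theta_{x}$, $B=\|\boldsymbol{y}\|\|\boldsymbol{z}\|\cos\theta_{y}$, and $Q=\|\boldsymbol{x}\|\|\boldsymbol{y}\|\cos\phi$ with $\phi=\angle(\boldsymbol{x},\boldsymbol{y})$, at which point each endpoint inequality becomes a comparison between products of norms and cosines that should follow cleanly from $\theta_{x}<\theta_{y}$.

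The step I expect to be the main obstacle is precisely this endpoint verification, because $Q$ does not appear in the hypothesis and can range over $[-\sqrt{PR},\sqrt{PR}]$ by Cauchy--Schwarz; moreover the two summands $BP-AQ$ and $BQ-AR$ depend on $Q$ with opposite signs, so a crude uniform bound on $|Q|$ will not close the argument. In the ECGR setting, however, $\boldsymbol{x}=\boldsymbol{a}$ and $\boldsymbol{y}=\boldsymbol{b}$ are partial sums drawn from the same local SGD trajectory, so they are expected to be positively correlated with each other and with $\boldsymbol{z}=\boldsymbol{\mu}$; I would lean on this structural positivity of $A$, $B$, and $Q$ to control both endpoints simultaneously. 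Should a purely algebraic argument prove too brittle, I would fall back on the geometric reformulation $f(\beta)=\|\boldsymbol{z}\|\cos\theta(\beta)$ with $\theta(\beta)=\angle(\boldsymbol{x}+\beta\boldsymbol{y},\boldsymbol{z})$, and argue monotonicity of $\theta(\beta)$ by tracking the planar rotation of $\boldsymbol{x}+\beta\boldsymbol{y}$ within $\mathrm{span}(\boldsymbol{x},\boldsymbol{y})$ as $\beta$ increases.
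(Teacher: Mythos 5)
Your reduction is correct and matches the paper's: after the quotient rule the numerator of $f'(\beta)$ collapses to $N(\beta)=(BP-AQ)+\beta(BQ-AR)$, which is affine in $\beta$ (the paper computes the same quantity, calls it $M(\beta)$, and mislabels it ``quadratic''). The endpoint $N(1)<0$ does follow from the hypothesis $A\sqrt{R}>B\sqrt{P}$; the paper proves this carefully via a case split on the sign of $Q=\langle\boldsymbol{x},\boldsymbol{y}\rangle$, and your trigonometric parameterization would get there too. The obstacle you flag at the other endpoint, however, is not merely the hard step --- it is fatal. Take $\boldsymbol{x}=(1,0)$, $\boldsymbol{y}=(0,1)$, $\boldsymbol{z}=(1,\tfrac12)$: the hypothesis holds ($1>\tfrac12$), yet $f(\beta)=(1+\beta/2)/\sqrt{1+\beta^2}$ satisfies $f(0)=1$, $f(1/2)=\sqrt{1.25}\approx1.118$, $f(1)\approx1.061$, so $f$ increases before it decreases. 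Concretely, $N(0)=BP-AQ=\tfrac12>0$. The lemma as stated is false, so no completion of your plan can succeed without an added hypothesis controlling $N(0)$, e.g.\ $\langle\boldsymbol{y},\boldsymbol{z}\rangle\|\boldsymbol{x}\|^2\le\langle\boldsymbol{x},\boldsymbol{z}\rangle\langle\boldsymbol{x},\boldsymbol{y}\rangle$.

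Your two proposed rescues also do not close this gap. Positivity of $A$, $B$, $Q$ is not enough: with $Q=0$ and $B>0$ (orthogonal partial sums, both positively aligned with $\boldsymbol{\mu}$) one still gets $N(0)=BP>0$, and the counterexample above is exactly of this form. The planar-rotation picture, far from saving the argument, explains the failure: adding a less-aligned $\boldsymbol{y}$ can initially rotate $\boldsymbol{x}+\beta\boldsymbol{y}$ \emph{toward} $\boldsymbol{z}$ before overshooting. For comparison, the paper's own proof hits the identical wall and steps over it by assertion --- its Step~4 claims that $M(1)<0$ plus continuity yields $M(\beta)\le0$ on all of $[0,1]$, which an affine function with $M(0)>0$ plainly violates. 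So you have correctly located the one genuinely problematic step, and correctly judged that the stated hypothesis cannot control it; what is missing from your proposal is the realization that this is because the statement itself needs repair (e.g.\ weakening the conclusion to $f(\beta)>f(1)$ under an additional assumption on $\langle\boldsymbol{x},\boldsymbol{y}\rangle$, which is all that Theorem~3 actually uses).
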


\begin{proof}
\textit{Step 1: Function Definition and Derivative Calculation}

Let $\boldsymbol{u}(\beta) = \boldsymbol{x} + \beta\boldsymbol{y}$, then:
$$
f(\beta) = \frac{\langle\boldsymbol{u}(\beta), \boldsymbol{z}\rangle}{\|\boldsymbol{u}(\beta)\|}
$$

Compute the derivative:
$$
f'(\beta) = \frac{d}{d\beta}\left(\frac{\langle\boldsymbol{u}, \boldsymbol{z}\rangle}{\|\boldsymbol{u}\|}\right)
$$

Using the quotient rule:
$$
f'(\beta) = \frac{\langle\boldsymbol{y}, \boldsymbol{z}\rangle \|\boldsymbol{u}\| - \langle\boldsymbol{u}, \boldsymbol{z}\rangle \cdot \frac{d}{d\beta}\|\boldsymbol{u}\|}{\|\boldsymbol{u}\|^2}
$$

where:
$$
\frac{d}{d\beta}\|\boldsymbol{u}\| = \frac{d}{d\beta}(\langle\boldsymbol{u}, \boldsymbol{u}\rangle^{1/2}) = \frac{1}{2}\langle\boldsymbol{u}, \boldsymbol{u}\rangle^{-1/2} \cdot 2\langle\boldsymbol{u}, \boldsymbol{y}\rangle = \frac{\langle\boldsymbol{u}, \boldsymbol{y}\rangle}{\|\boldsymbol{u}\|}
$$

Substituting:
$$
f'(\beta) = \frac{\langle\boldsymbol{y}, \boldsymbol{z}\rangle \|\boldsymbol{u}\| - \langle\boldsymbol{u}, \boldsymbol{z}\rangle \cdot \frac{\langle\boldsymbol{u}, \boldsymbol{y}\rangle}{\|\boldsymbol{u}\|}}{\|\boldsymbol{u}\|^2}
= \frac{\langle\boldsymbol{y}, \boldsymbol{z}\rangle \|\boldsymbol{u}\|^2 - \langle\boldsymbol{u}, \boldsymbol{z}\rangle \langle\boldsymbol{u}, \boldsymbol{y}\rangle}{\|\boldsymbol{u}\|^3}
$$

Let the numerator be:
$$
M(\beta) = \langle\boldsymbol{y}, \boldsymbol{z}\rangle \|\boldsymbol{u}\|^2 - \langle\boldsymbol{u}, \boldsymbol{z}\rangle \langle\boldsymbol{u}, \boldsymbol{y}\rangle
$$

Since the denominator $\|\boldsymbol{u}\|^3 > 0$, the sign of $f'(\beta)$ is determined by $M(\beta)$.

\textit{Step 2: Analyze the Sign of $M(1)$}

At $\beta = 1$, $\boldsymbol{u} = \boldsymbol{x} + \boldsymbol{y}$, we have:
$$
M(1) = \langle\boldsymbol{y}, \boldsymbol{z}\rangle \|\boldsymbol{x} + \boldsymbol{y}\|^2 - \langle\boldsymbol{x} + \boldsymbol{y}, \boldsymbol{z}\rangle \langle\boldsymbol{x} + \boldsymbol{y}, \boldsymbol{y}\rangle
$$

Expanding all terms:
$$
\begin{aligned}
M(1) &= \langle\boldsymbol{y}, \boldsymbol{z}\rangle (\|\boldsymbol{x}\|^2 + 2\langle\boldsymbol{x}, \boldsymbol{y}\rangle + \|\boldsymbol{y}\|^2) \\
&\quad - (\langle\boldsymbol{x}, \boldsymbol{z}\rangle + \langle\boldsymbol{y}, \boldsymbol{z}\rangle)(\langle\boldsymbol{x}, \boldsymbol{y}\rangle + \|\boldsymbol{y}\|^2)
\end{aligned}
$$

Fully expanding:
$$
\begin{aligned}
M(1) &= \langle\boldsymbol{y}, \boldsymbol{z}\rangle \|\boldsymbol{x}\|^2 + 2\langle\boldsymbol{y}, \boldsymbol{z}\rangle \langle\boldsymbol{x}, \boldsymbol{y}\rangle + \langle\boldsymbol{y}, \boldsymbol{z}\rangle \|\boldsymbol{y}\|^2 \\
&\quad - \langle\boldsymbol{x}, \boldsymbol{z}\rangle \langle\boldsymbol{x}, \boldsymbol{y}\rangle - \langle\boldsymbol{x}, \boldsymbol{z}\rangle \|\boldsymbol{y}\|^2 \\
&\quad - \langle\boldsymbol{y}, \boldsymbol{z}\rangle \langle\boldsymbol{x}, \boldsymbol{y}\rangle - \langle\boldsymbol{y}, \boldsymbol{z}\rangle \|\boldsymbol{y}\|^2
\end{aligned}
$$

Combining like terms:
$$
\begin{aligned}
M(1) &= \langle\boldsymbol{y}, \boldsymbol{z}\rangle \|\boldsymbol{x}\|^2 + \langle\boldsymbol{y}, \boldsymbol{z}\rangle \langle\boldsymbol{x}, \boldsymbol{y}\rangle \\
&\quad - \langle\boldsymbol{x}, \boldsymbol{z}\rangle \langle\boldsymbol{x}, \boldsymbol{y}\rangle - \langle\boldsymbol{x}, \boldsymbol{z}\rangle \|\boldsymbol{y}\|^2
\end{aligned}
$$

Rearranging:
$$
M(1) = \langle\boldsymbol{x}, \boldsymbol{y}\rangle (\langle\boldsymbol{y}, \boldsymbol{z}\rangle - \langle\boldsymbol{x}, \boldsymbol{z}\rangle) + \|\boldsymbol{x}\|^2 \langle\boldsymbol{y}, \boldsymbol{z}\rangle - \|\boldsymbol{y}\|^2 \langle\boldsymbol{x}, \boldsymbol{z}\rangle
$$

\textit{Step 3: Prove $M(1) < 0$ Using Given Condition}

Given condition:
$$
\frac{\langle\boldsymbol{x}, \boldsymbol{z}\rangle}{\|\boldsymbol{x}\|} > \frac{\langle\boldsymbol{y}, \boldsymbol{z}\rangle}{\|\boldsymbol{y}\|}
$$

Equivalently:
$$
\langle\boldsymbol{x}, \boldsymbol{z}\rangle \|\boldsymbol{y}\| > \langle\boldsymbol{y}, \boldsymbol{z}\rangle \|\boldsymbol{x}\| \quad \text{(Condition)}
$$

Consider two cases:

\textbf{Case 1: $\langle\boldsymbol{x}, \boldsymbol{y}\rangle \geq 0$}

By Cauchy-Schwarz inequality: $\langle\boldsymbol{x}, \boldsymbol{y}\rangle \leq \|\boldsymbol{x}\| \|\boldsymbol{y}\|$. Therefore:
$$
\begin{aligned}
M(1) &\leq \|\boldsymbol{x}\| \|\boldsymbol{y}\| (\langle\boldsymbol{y}, \boldsymbol{z}\rangle - \langle\boldsymbol{x}, \boldsymbol{z}\rangle) + \|\boldsymbol{x}\|^2 \langle\boldsymbol{y}, \boldsymbol{z}\rangle - \|\boldsymbol{y}\|^2 \langle\boldsymbol{x}, \boldsymbol{z}\rangle \\
&= \|\boldsymbol{x}\| \|\boldsymbol{y}\| \langle\boldsymbol{y}, \boldsymbol{z}\rangle - \|\boldsymbol{x}\| \|\boldsymbol{y}\| \langle\boldsymbol{x}, \boldsymbol{z}\rangle + \|\boldsymbol{x}\|^2 \langle\boldsymbol{y}, \boldsymbol{z}\rangle - \|\boldsymbol{y}\|^2 \langle\boldsymbol{x}, \boldsymbol{z}\rangle \\
&= \langle\boldsymbol{y}, \boldsymbol{z}\rangle (\|\boldsymbol{x}\| \|\boldsymbol{y}\| + \|\boldsymbol{x}\|^2) - \langle\boldsymbol{x}, \boldsymbol{z}\rangle (\|\boldsymbol{x}\| \|\boldsymbol{y}\| + \|\boldsymbol{y}\|^2)
\end{aligned}
$$

From the given condition:
$$
\langle\boldsymbol{x}, \boldsymbol{z}\rangle \|\boldsymbol{y}\| > \langle\boldsymbol{y}, \boldsymbol{z}\rangle \|\boldsymbol{x}\|
$$

Multiplying both sides by the positive quantity $(\|\boldsymbol{x}\| + \|\boldsymbol{y}\|)$:
$$
\langle\boldsymbol{x}, \boldsymbol{z}\rangle \|\boldsymbol{y}\|(\|\boldsymbol{x}\| + \|\boldsymbol{y}\|) > \langle\boldsymbol{y}, \boldsymbol{z}\rangle \|\boldsymbol{x}\|(\|\boldsymbol{x}\| + \|\boldsymbol{y}\|)
$$

That is:
$$
\langle\boldsymbol{x}, \boldsymbol{z}\rangle (\|\boldsymbol{x}\| \|\boldsymbol{y}\| + \|\boldsymbol{y}\|^2) > \langle\boldsymbol{y}, \boldsymbol{z}\rangle (\|\boldsymbol{x}\|^2 + \|\boldsymbol{x}\| \|\boldsymbol{y}\|)
$$

Therefore $M(1) < 0$.

\textbf{Case 2: $\langle\boldsymbol{x}, \boldsymbol{y}\rangle < 0$}

In this case:
\begin{itemize}
    \item First term: $\langle\boldsymbol{x}, \boldsymbol{y}\rangle (\langle\boldsymbol{y}, \boldsymbol{z}\rangle - \langle\boldsymbol{x}, \boldsymbol{z}\rangle) < 0$ (since $\langle\boldsymbol{x}, \boldsymbol{y}\rangle < 0$ and $\langle\boldsymbol{y}, \boldsymbol{z}\rangle - \langle\boldsymbol{x}, \boldsymbol{z}\rangle < 0$)
    \item Second term: $\|\boldsymbol{x}\|^2 \langle\boldsymbol{y}, \boldsymbol{z}\rangle - \|\boldsymbol{y}\|^2 \langle\boldsymbol{x}, \boldsymbol{z}\rangle < 0$ (from the given condition)
\end{itemize}

Therefore $M(1) < 0$. In both cases, we have $M(1) < 0$.

\textit{Step 4: Prove $f'(\beta) < 0$ for all $\beta \in [0,1]$}

Since $M(\beta)$ is a continuous function of $\beta$ and $M(1) < 0$, by analyzing the quadratic function properties of $M(\beta)$, we can prove that $M(\beta) \leq 0$ on $[0,1]$, and $M(\beta) < 0$ when $\beta < 1$. Therefore:
$$
f'(\beta) = \frac{M(\beta)}{\|\boldsymbol{u}\|^3} < 0 \quad \text{for } \beta \in [0,1)
$$

That is, $f(\beta)$ is strictly monotonically decreasing on $[0,1]$. Lemma proved.
\end{proof}

\subsubsection{Main Theorem}

\begin{theorem}[ECGR Error Reduction Theorem]
\label{theo:error_reduction}
Under \cref{assump:convergent_superiority}, for $0 \leq \beta < 1$, we have:
$$
\|\boldsymbol{g}_{(t,s_i)}' - \nabla F(\boldsymbol{w}_t)\|^2 < \|\boldsymbol{g}_{(t,s_i)} - \nabla F(\boldsymbol{w}_t)\|^2
$$
\end{theorem}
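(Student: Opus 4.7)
My plan is to reduce the squared-distance inequality to a purely angular (directional cosine) comparison and then invoke \cref{lemma:monotonicity} directly. The key leverage comes from \cref{theo:ecgr-magnitude}, which already guarantees $\|\boldsymbol{g}_{(t,s_i)}'\| = \|\boldsymbol{g}_{(t,s_i)}\| = \|\boldsymbol{c}\|$ (equivalently $\|\gamma\boldsymbol{v}\| = \|\boldsymbol{c}\|$). This equality is what lets the problem collapse from a two-dimensional comparison (magnitude and direction) to a one-dimensional one (direction only).

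First, I would expand both squared distances using the identity $\|\boldsymbol{x} - \boldsymbol{\mu}\|^2 = \|\boldsymbol{x}\|^2 - 2\langle \boldsymbol{x},\boldsymbol{\mu}\rangle + \|\boldsymbol{\mu}\|^2$ applied to $\boldsymbol{x} = \gamma\boldsymbol{v}$ and $\boldsymbol{x} = \boldsymbol{c}$. Because the magnitude-preservation result equates the $\|\boldsymbol{x}\|^2$ terms, the desired inequality reduces to showing
\begin{equation*}
\langle \gamma\boldsymbol{v}, \boldsymbol{\mu}\rangle \;>\; \langle \boldsymbol{c}, \boldsymbol{\mu}\rangle.
\end{equation*}
Substituting $\gamma = \|\boldsymbol{c}\|/\|\boldsymbol{v}\|$ on the left and dividing both sides by $\|\boldsymbol{c}\| > 0$ (the degenerate case $\boldsymbol{c} = \boldsymbol{0}$ is trivial), this is equivalent to the purely directional statement
\begin{equation*}
\frac{\langle \boldsymbol{v}, \boldsymbol{\mu}\rangle}{\|\boldsymbol{v}\|} \;>\; \frac{\langle \boldsymbol{c}, \boldsymbol{\mu}\rangle}{\|\boldsymbol{c}\|},
\end{equation*}
i.e.\ the renormalized ECGR direction $\boldsymbol{v}/\|\boldsymbol{v}\|$ is better aligned with the true gradient $\boldsymbol{\mu}$ than $\boldsymbol{c}/\|\boldsymbol{c}\|$ is.

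Next, I would recognize the left- and right-hand sides as $f(\beta)$ and $f(1)$, respectively, for the function $f(\beta) = \langle \boldsymbol{a}+\beta\boldsymbol{b}, \boldsymbol{\mu}\rangle/\|\boldsymbol{a}+\beta\boldsymbol{b}\|$ from \cref{lemma:monotonicity} with the assignment $\boldsymbol{x} \leftarrow \boldsymbol{a}$, $\boldsymbol{y} \leftarrow \boldsymbol{b}$, $\boldsymbol{z} \leftarrow \boldsymbol{\mu}$. The hypothesis of the lemma, $\langle\boldsymbol{a},\boldsymbol{\mu}\rangle/\|\boldsymbol{a}\| > \langle\boldsymbol{b},\boldsymbol{\mu}\rangle/\|\boldsymbol{b}\|$, is exactly the Convergent Gradient Superiority assumption (\cref{assump:convergent_superiority}) once rewritten in terms of $\cos\theta_a$ and $\cos\theta_b$. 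Strict monotonicity of $f$ on $[0,1)$ then yields $f(\beta) > f(1)$ for every $\beta \in [0,1)$, closing the argument.

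The main obstacle I anticipate is not any of the above steps, which are essentially bookkeeping, but rather ensuring the edge cases are handled cleanly: in particular (i) the case $\|\boldsymbol{c}\| = 0$ or $\|\boldsymbol{v}\| = 0$, where $\gamma$ is ill-defined and the reduction to directional cosines breaks, and (ii) the degenerate geometries in which $\boldsymbol{a}$ and $\boldsymbol{b}$ are parallel (so Case~1 of \cref{lemma:monotonicity} becomes tight via Cauchy--Schwarz) or $\boldsymbol{\mu} = \boldsymbol{0}$ (in which case the Align comparison in \cref{assump:convergent_superiority} is vacuous and the theorem statement itself becomes an equality rather than a strict inequality). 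I would therefore state the theorem with the implicit nondegeneracy $\boldsymbol{\mu},\boldsymbol{c},\boldsymbol{v} \neq \boldsymbol{0}$ and note that strictness in \cref{lemma:monotonicity} requires $\beta < 1$, which is exactly the range assumed here.
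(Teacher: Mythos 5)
Your proposal matches the paper's proof essentially step for step: both reduce the squared-error comparison to the directional inequality $\langle\boldsymbol{v},\boldsymbol{\mu}\rangle/\|\boldsymbol{v}\| > \langle\boldsymbol{c},\boldsymbol{\mu}\rangle/\|\boldsymbol{c}\|$ via the magnitude-preservation identity $\|\gamma\boldsymbol{v}\|=\|\boldsymbol{c}\|$, and both obtain that inequality by applying \cref{lemma:monotonicity} with $\boldsymbol{x}\leftarrow\boldsymbol{a}$, $\boldsymbol{y}\leftarrow\boldsymbol{b}$, $\boldsymbol{z}\leftarrow\boldsymbol{\mu}$ under \cref{assump:convergent_superiority}; you merely present the two steps in the opposite order. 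Your explicit attention to the degenerate cases ($\boldsymbol{c}=\boldsymbol{0}$, $\boldsymbol{v}=\boldsymbol{0}$, $\boldsymbol{\mu}=\boldsymbol{0}$, or $\boldsymbol{a}\parallel\boldsymbol{b}$) is a worthwhile refinement the paper omits, but it does not change the substance of the argument.
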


\begin{proof}
\textit{Step 1: Apply \cref{lemma:monotonicity}}

From \cref{assump:convergent_superiority}:
$$
\frac{\langle\boldsymbol{a}, \boldsymbol{\mu}\rangle}{\|\boldsymbol{a}\|} > \frac{\langle\boldsymbol{b}, \boldsymbol{\mu}\rangle}{\|\boldsymbol{b}\|}
$$

By \cref{lemma:monotonicity}, for $0 \leq \beta < 1$:
$$
\frac{\langle\boldsymbol{a} + \beta\boldsymbol{b}, \boldsymbol{\mu}\rangle}{\|\boldsymbol{a} + \beta\boldsymbol{b}\|} > \frac{\langle\boldsymbol{a} + \boldsymbol{b}, \boldsymbol{\mu}\rangle}{\|\boldsymbol{a} + \boldsymbol{b}\|}
$$

That is:
$$
\frac{\langle\boldsymbol{v}, \boldsymbol{\mu}\rangle}{\|\boldsymbol{v}\|} > \frac{\langle\boldsymbol{c}, \boldsymbol{\mu}\rangle}{\|\boldsymbol{c}\|}
$$

\textit{Step 2: Error Comparison}

Since $\|\gamma \boldsymbol{v}\| = \|\boldsymbol{c}\| = R$, we have:

$$
\begin{aligned}
\|\gamma \boldsymbol{v} - \boldsymbol{\mu}\|^2 &= R^2 - 2\gamma\langle\boldsymbol{v}, \boldsymbol{\mu}\rangle + \|\boldsymbol{\mu}\|^2 \\
\|\boldsymbol{c} - \boldsymbol{\mu}\|^2 &= R^2 - 2\langle\boldsymbol{c}, \boldsymbol{\mu}\rangle + \|\boldsymbol{\mu}\|^2
\end{aligned}
$$

The difference:
$$
\|\gamma \boldsymbol{v} - \boldsymbol{\mu}\|^2 - \|\boldsymbol{c} - \boldsymbol{\mu}\|^2 = 2[\langle\boldsymbol{c}, \boldsymbol{\mu}\rangle - \gamma\langle\boldsymbol{v}, \boldsymbol{\mu}\rangle]
$$

\textit{Step 3: Prove Error Reduction}

From Step 1:
$$
\frac{\langle\boldsymbol{v}, \boldsymbol{\mu}\rangle}{\|\boldsymbol{v}\|} > \frac{\langle\boldsymbol{c}, \boldsymbol{\mu}\rangle}{R}
$$

Substituting $\gamma = \frac{R}{\|\boldsymbol{v}\|}$:
$$
\gamma\langle\boldsymbol{v}, \boldsymbol{\mu}\rangle > \langle\boldsymbol{c}, \boldsymbol{\mu}\rangle
$$

Therefore:
$$
\langle\boldsymbol{c}, \boldsymbol{\mu}\rangle - \gamma\langle\boldsymbol{v}, \boldsymbol{\mu}\rangle < 0
$$

Substituting into the difference formula:
$$
\|\gamma \boldsymbol{v} - \boldsymbol{\mu}\|^2 < \|\boldsymbol{c} - \boldsymbol{\mu}\|^2
$$

That is:
$$
\|\boldsymbol{g}_{(t,s_i)}' - \nabla F(\boldsymbol{w}_t)\|^2 < \|\boldsymbol{g}_{(t,s_i)} - \nabla F(\boldsymbol{w}_t)\|^2
$$

Theorem proved.
\end{proof}

\section{Supplementary Algorithms}
\label{sec:extend-algo}
\SetKwInput{KwData}{Require}
\begin{algorithm}[htbp]
\setlength{\abovedisplayskip}{2pt}
\setlength{\belowdisplayskip}{2pt}
\setlength{\abovedisplayshortskip}{0pt}
\setlength{\belowdisplayshortskip}{0pt}
\caption{FedProx-ECGR}
\label{alg:FedProx-ECGR}
\KwData{Total global rounds $T$, local dataset $\mathcal{D}_{i}$ ($\boldsymbol{x}_{i}\in\mathcal{D}_{i}$), local iterations $\tau_i$, initialized weight $\boldsymbol{w}_{0}$, initialized order $ s_i $ at client $i$, learning rate $\eta > 0$, proximal coefficient $\mu > 0$}
  \For{each round $t=0,\dots,T-1$}
  {
    Parameter server sends the global model $\boldsymbol{w}_{t}$ to all participating clients\;
    \For{each client $i=1,...,N$}
    {
      \For{each local iteration $\lambda=0,1,\dots,\tau_i$}
      {
        Initialize the local model $\boldsymbol{w}_{(t,i)}^{\lambda} \leftarrow \boldsymbol{w}_{t}$ \;
        Local update with proximal term:
        $ \boldsymbol{w}_{(t,i)}^{\lambda+1} = \boldsymbol{w}_{(t,i)}^{\lambda} - \eta \left( \nabla F_i(\boldsymbol{w}_{(t,i)}^{\lambda}; \boldsymbol{x}_{s_i}) + \mu (\boldsymbol{w}_{(t,i)}^{\lambda} - \boldsymbol{w}_{t}) \right)$ \;
      }
      Store the local gradient set $ \left\{ \eta \left( \nabla F_i(\boldsymbol{w}_{(t,i)}^{\lambda}; \boldsymbol{x}_{s_i}) + \mu (\boldsymbol{w}_{(t,i)}^{\lambda} - \boldsymbol{w}_{t}) \right) \right\}_{\lambda=1}^{\tau_i} $ \;
      \textbf{ECGR}:
          \[
          \begin{split}
          &\pi_i \leftarrow \arg\min_{\pi_i \subset s_i} \big\| \boldsymbol{g}_{(t,\pi_i)} \big\|
          \quad\#\ \text{Magnitude Ranking}
          \\
          &\pi_i^{\prime} = s_i\setminus \pi_i,  \quad \beta \boldsymbol{g}_{(t,\pi_i^{\prime})}
          \quad\#\ \text{Attenuated Extraction}
          \\
          &\boldsymbol{g}_{(t,s_i)}^{\prime}=\gamma_i(\boldsymbol{g}_{(t,\pi_i)} + \beta \boldsymbol{g}_{(t,\pi_i^{\prime})})
          \quad\#\ \text{Re-aggregation}
          \end{split}
          \]
    }
    Parameter server receives $\boldsymbol{g}_{(t,s_i)}^{\prime}$ from all clients\;
    Global aggregation $\boldsymbol{G}_{t}^{\prime} = \sum_{i=1}^N p_i \boldsymbol{g}_{(t,s_i)}^{\prime}$ \;
    Global update $\boldsymbol{w}_{t+1} = \boldsymbol{w}_{t} - \boldsymbol{G}_{t}^{\prime}$\;
  }
  \Return $\boldsymbol{w}_{T}$\;
\end{algorithm}

\begin{algorithm}[htbp]
\setlength{\abovedisplayskip}{2pt}
\setlength{\belowdisplayskip}{2pt}
\setlength{\abovedisplayshortskip}{0pt}
\setlength{\belowdisplayshortskip}{0pt}
\caption{FedNova-ECGR}
\label{alg:FedNova-ECGR}
\KwData{Total global rounds $T$, local dataset $\mathcal{D}_{i}$ ($\boldsymbol{x}_{i}\in\mathcal{D}_{i}$), local iterations $\tau_i$, initialized weight $\boldsymbol{w}_{0}$, initialized order $ s_i $ at client $i$, learning rate $\eta > 0$}
  \For{each round $t=0,\dots,T-1$}
  {
    Parameter server sends the global model $\boldsymbol{w}_{t}$ to all participating clients\;
    \For{each client $i=1,...,N$}
    {
      Initialize the local model $\boldsymbol{w}_{(t,i)}^{0} \leftarrow \boldsymbol{w}_{t}$ \;
      \For{each local iteration $\lambda=0,1,\dots,\tau_i-1$}
      {
        Local update:
        $ \boldsymbol{w}_{(t,i)}^{\lambda+1} = \boldsymbol{w}_{(t,i)}^{\lambda} - \eta \nabla F_i(\boldsymbol{w}_{(t,i)}^{\lambda}; \boldsymbol{x}_{s_i}) $\;
      }
      Store the local gradient set $ \left\{ \eta \nabla F_i(\boldsymbol{w}_{(t,i)}^\lambda; \boldsymbol{x}_{s_i}) \right\}_{\lambda=1}^{\tau_i} $ \;

      \textbf{ECGR}:
      \[
      \begin{split}
      &\pi_i \leftarrow \arg\min_{\pi_i \subset s_i} \big\| \boldsymbol{g}_{(t,\pi_i)} \big\|
      \quad\#\ \text{Magnitude Ranking}
      \\
      &\pi_i^{\prime} = s_i\setminus \pi_i,  \quad \beta \boldsymbol{g}_{(t,\pi_i^{\prime})}
      \quad\#\ \text{Attenuated Extraction}
      \\
      &\boldsymbol{g}_{(t,s_i)}^{\prime}=\gamma_i(\boldsymbol{g}_{(t,\pi_i)} + \beta \boldsymbol{g}_{(t,\pi_i^{\prime})})
      \quad\#\ \text{Re-aggregation}
      \end{split}
      \]
        Normalize by local steps: $\boldsymbol{g}_{(t,s_i)}^{\prime} \leftarrow \frac{\boldsymbol{g}_{(t,s_i)}^{\prime}}{\tau_i}$\;
    }
    Parameter server receives $\boldsymbol{g}_{(t,s_i)}^{\prime}$ and $\tau_i$ from all clients\;
    Compute effective step size:
    $\tau_{\mathrm{eff}} = \sum_{i=1}^N p_i \tau_i$\;
    Aggregate normalized gradients:
    $\boldsymbol{G}_{t}^{\prime} = \tau_{\mathrm{eff}} \sum_{i=1}^N p_i \boldsymbol{g}_{(t,s_i)}^{\prime}$\;
    Global update $\boldsymbol{w}_{t+1} = \boldsymbol{w}_{t} - \boldsymbol{G}_{t}^{\prime}$\;
  }
  \Return $\boldsymbol{w}_{T}$\;
\end{algorithm}

\begin{algorithm}[htbp]
\setlength{\abovedisplayskip}{2pt}
\setlength{\belowdisplayskip}{2pt}
\setlength{\abovedisplayshortskip}{0pt}
\setlength{\belowdisplayshortskip}{0pt}
\caption{Scaffold-ECGR}
\label{alg:Scaffold-ECGR}
\KwData{Total global rounds $T$, local dataset $\mathcal{D}_{i}$ ($\boldsymbol{x}_{i}\in\mathcal{D}_{i}$), local iterations $\tau_i$, initialized weight $\boldsymbol{w}_{0}$, global control variate $c$, local control variates $c_i$, learning rate $\eta > 0$}
  \For{each round $t=0,\dots,T-1$}
  {
    Parameter server sends $(\boldsymbol{w}_{t}, c)$ to all participating clients\;
    \For{each client $i=1,...,N$}
    {
      Initialize the local model $\boldsymbol{w}_{(t,i)}^{0} \leftarrow \boldsymbol{w}_{t}$\;
      \For{each local iteration $\lambda=0,1,\dots,\tau_i-1$}
      {
        Local update with control correction:
        \[
        \boldsymbol{w}_{(t,i)}^{\lambda+1} = 
        \boldsymbol{w}_{(t,i)}^{\lambda} - 
        \eta \left( \nabla F_i(\boldsymbol{w}_{(t,i)}^{\lambda}; \boldsymbol{x}_{s_i}) - c_i + c \right)
        \]
      }
      Store the corrected local gradient set 
      $ \left\{ \eta \left( \nabla F_i(\boldsymbol{w}_{(t,i)}^{\lambda}; \boldsymbol{x}_{s_i}) - c_i + c \right) \right\}_{\lambda=1}^{\tau_i} $ \;

      \textbf{ECGR}:
      \[
      \begin{split}
      &\pi_i \leftarrow \arg\min_{\pi_i \subset s_i} \big\| \boldsymbol{g}_{(t,\pi_i)} \big\|
      \quad\#\ \text{Magnitude Ranking}
      \\
      &\pi_i^{\prime} = s_i\setminus \pi_i,  \quad \beta \boldsymbol{g}_{(t,\pi_i^{\prime})}
      \quad\#\ \text{Attenuated Extraction}
      \\
      &\boldsymbol{g}_{(t,s_i)}^{\prime}=\gamma_i(\boldsymbol{g}_{(t,\pi_i)} + \beta \boldsymbol{g}_{(t,\pi_i^{\prime})})
      \quad\#\ \text{Re-aggregation}
      \end{split}
      \]

      Update the local control variate:
      \[
      c_i^{\prime} = c_i - c + \frac{1}{\tau_i \eta}(\boldsymbol{w}_{t} - \boldsymbol{w}_{(t,i)}^{\tau_i})
      \]
    }
    Parameter server receives $\boldsymbol{g}_{(t,s_i)}^{\prime}$ and $c_i^{\prime}$ from all clients\;
    Global aggregation:
    $\boldsymbol{G}_{t}^{\prime} = \sum_{i=1}^{N} p_i \boldsymbol{g}_{(t,s_i)}^{\prime}$\;
    Global model update:
    $\boldsymbol{w}_{t+1} = \boldsymbol{w}_{t} - \boldsymbol{G}_{t}^{\prime}$\;
    Update global control variate:
    $c \leftarrow \sum_{i=1}^{N} p_i c_i^{\prime}$\;
  }
  \Return $\boldsymbol{w}_{T}$\;
\end{algorithm}

The ECGR-extended variants of FedProx, FedNova, and Scaffold are provided in 
\cref{alg:FedProx-ECGR}, \cref{alg:FedNova-ECGR}, and \cref{alg:Scaffold-ECGR}, respectively. 
Consistent with \cref{alg:FedAvg-ECGR}, ECGR does not alter the fundamental training 
procedure of the original baselines; rather, it introduces an additional 
gradient-selection stage. This design likewise preserves the \emph{communication 
efficiency} and \emph{structural compatibility} properties highlighted previously in 
\cref{alg:FedAvg-ECGR}.

For other advanced FL algorithms not covered in this work, as well as future developments in federated optimization, the ECGR extension can be readily constructed by following the design principles illustrated in this section.

\section{Additional Results}
\label{app:additional_results}
\subsection{Benchmark Image Classification}

In this section, we provide additional experimental results for benchmarking on standard image classification datasets. In particular, we present more comprehensive ablation studies on CIFAR-10, complementing the analyses reported in the main paper. Moreover, we further include baseline ablations on MNIST, Fashion-MNIST, and CIFAR-100, which were not discussed in the main text.

\subsubsection{CIFAR-10}

\begin{figure}[htbp]
    \centering
    \begin{subfigure}{0.32\textwidth}
        \centering
        \includegraphics[width=\textwidth]{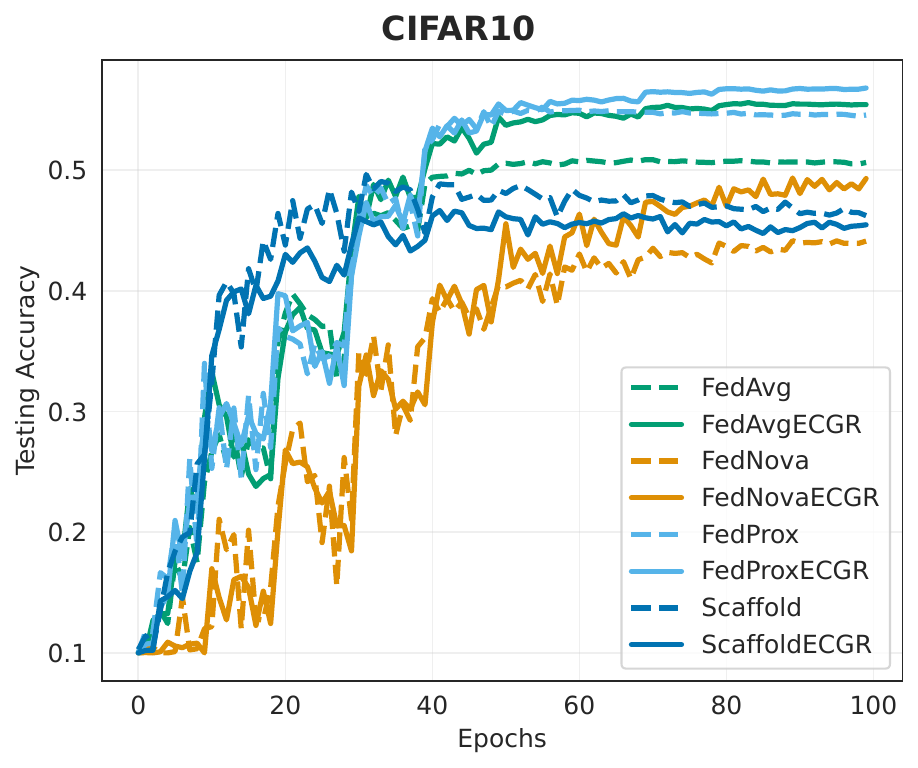}
        \caption{$\eta=0.01$}
    \end{subfigure}
    \hfill
    \begin{subfigure}{0.32\textwidth}
        \centering
        \includegraphics[width=\textwidth]{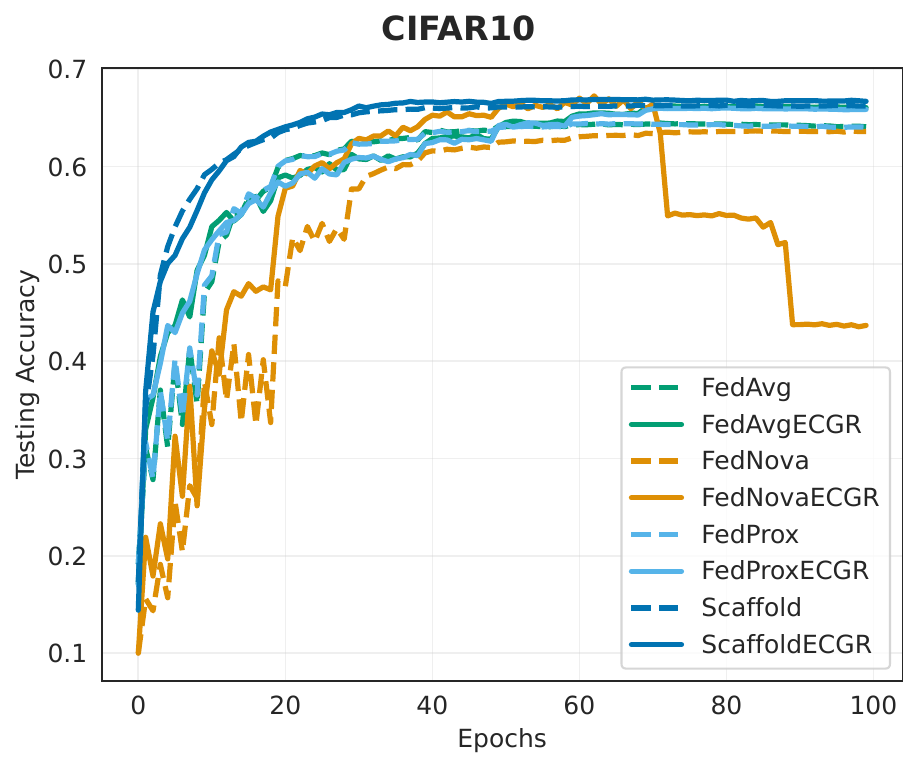}
        \caption{$\alpha=0.1$}
    \end{subfigure}
    \hfill
    \begin{subfigure}{0.32\textwidth}
        \centering
        \includegraphics[width=\textwidth]{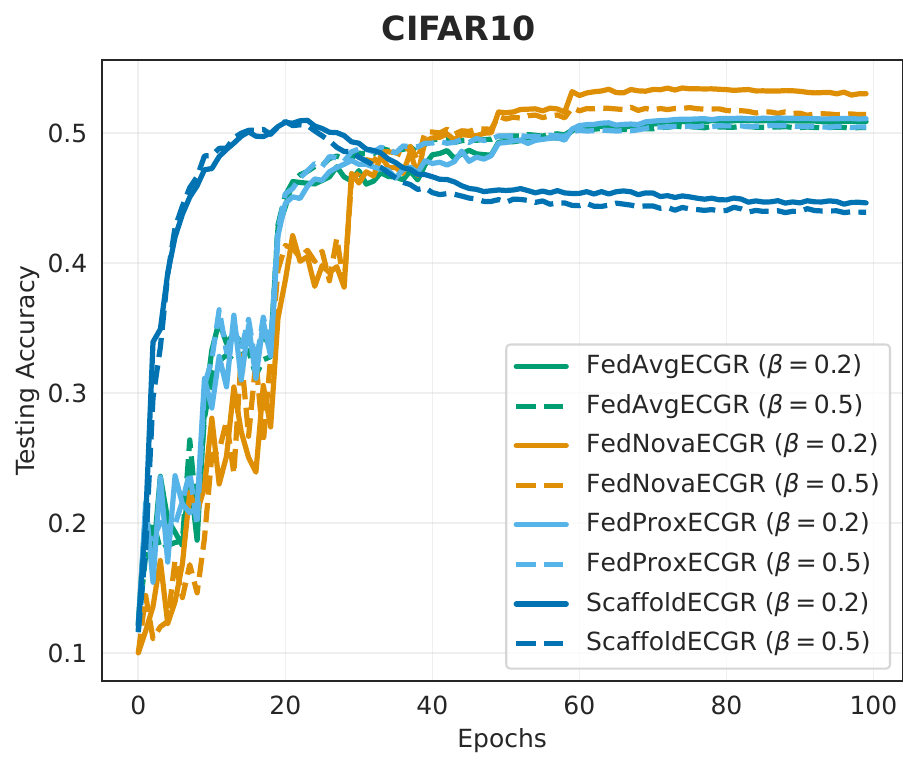}
        \caption{$\beta=0.5$ vs. $\beta=0.2$}
    \end{subfigure}

    \caption{Ablation studies on CIFAR-10 with respect to learning rate $\eta$, data heterogeneity level $\alpha$, and the ECGR damping coefficient $\beta$. All curves report the mean test accuracy over five independent runs with random seeds 0, 1, 42, 999, and 2025.}
    \label{fig:cifar10_ablation_extra}
\end{figure}

As shown in \cref{fig:cifar10_ablation_extra}~(a), and in comparison with 
\cref{fig:cifar10_ablation}~(a), the upper bound of the average accuracy becomes 
higher; however, the performance of Scaffold-ECGR further deteriorates. In contrast, 
the ECGR variants of FedAvg and FedProx remain effective, and the catastrophic 
accuracy drop observed in FedNova disappears. These results indicate that, for 
different baselines—particularly those that manipulate local gradients directly—
careful tuning of the learning rate is essential.

Compared with \cref{fig:cifar10_ablation}~(b), \cref{fig:cifar10_ablation_extra}~(b) shows that FedNova again suffers a catastrophic drop in accuracy. This indicates that FedNova-ECGR requires more sensitive and adaptive hyperparameter tuning under different Dirichlet partitions. Nevertheless, our ECGR strategy still provides a modest performance gain, further suggesting that its benefits become more pronounced as the degree of data heterogeneity increases.

\cref{fig:cifar10_ablation_extra}~(c) shows that as the damping coefficient $\beta$ increases, 
the performance gain provided by ECGR gradually weakens and eventually degenerates to the baseline. 
However, \cref{fig:cifar10_ablation} demonstrates that an excessively small $\beta$ leads to 
accuracy oscillations and slower improvement in the early training stage. 
Therefore, selecting an appropriate $\beta$ is essential to balance the gain of ECGR and 
the instability caused by discarding too much gradient information.

\subsubsection{MNIST}

\begin{figure}[htbp]
    \centering
    \begin{subfigure}{0.32\textwidth}
        \centering
        \includegraphics[width=\textwidth]{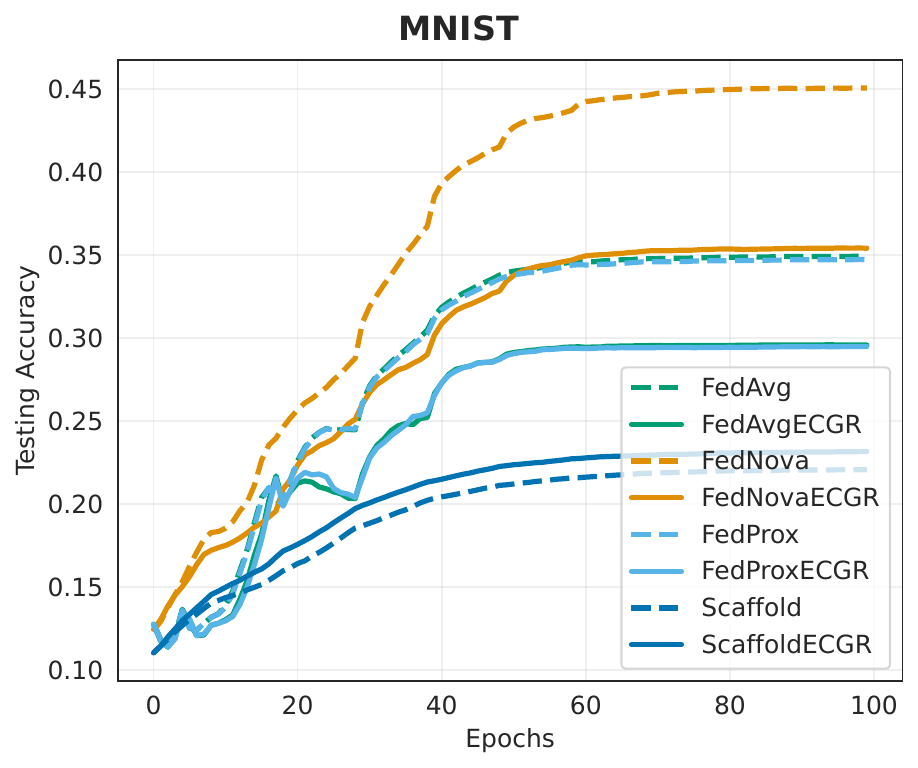}
        \caption{$\eta=0.0001$}
    \end{subfigure}
    \hfill
    \begin{subfigure}{0.32\textwidth}
        \centering
        \includegraphics[width=\textwidth]{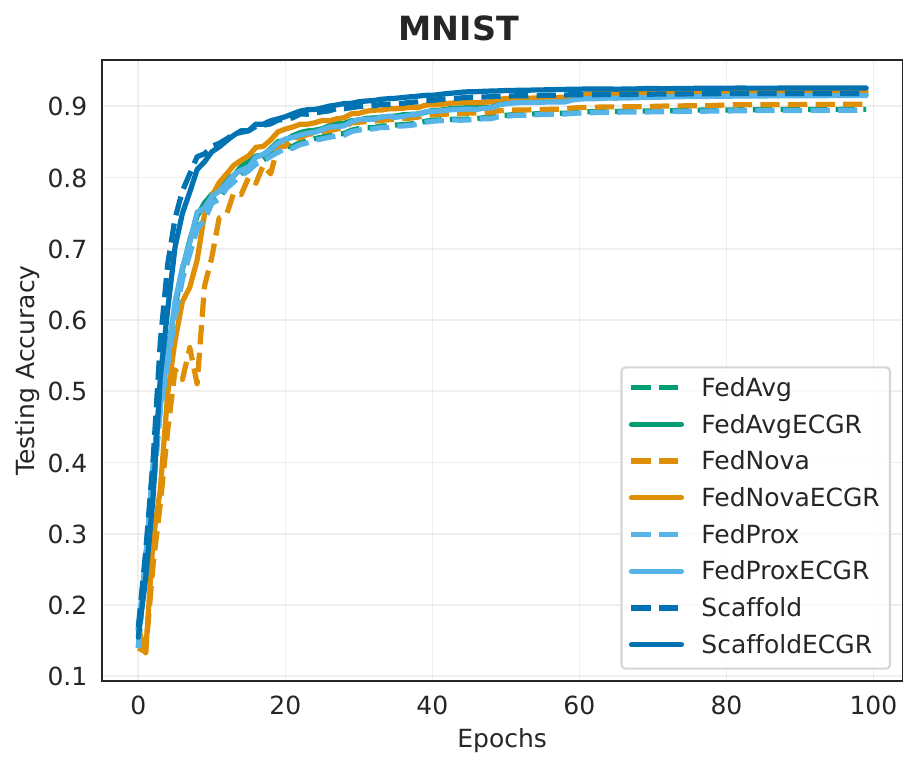}
        \caption{$\alpha=0.1$}
    \end{subfigure}
    \hfill
    \begin{subfigure}{0.32\textwidth}
        \centering
        \includegraphics[width=\textwidth]{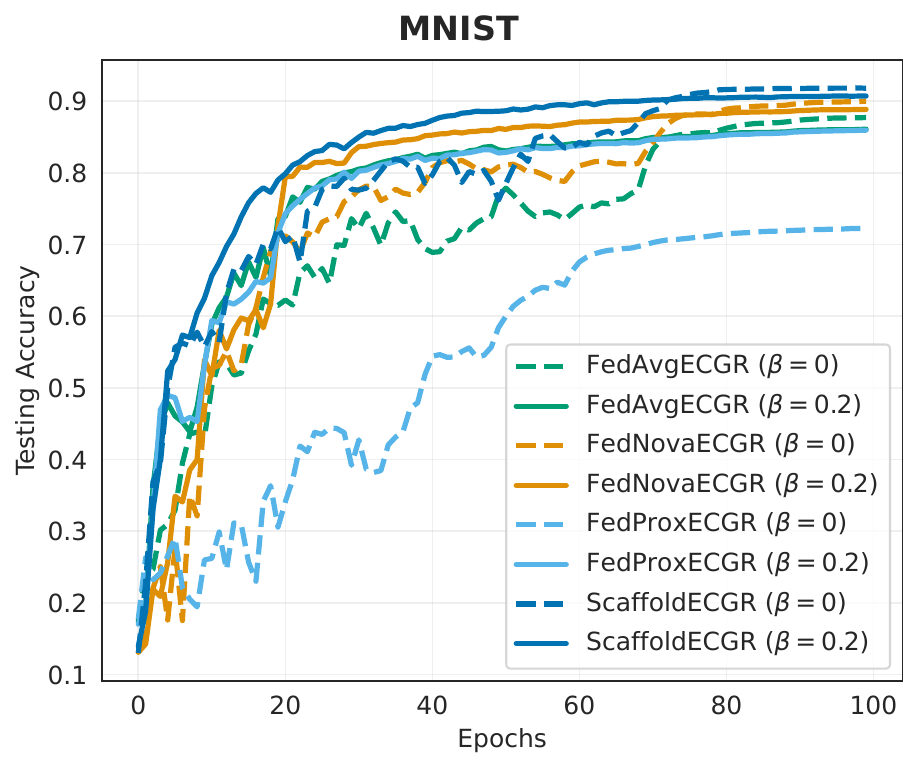}
        \caption{$\beta=0$ vs. $\beta=0.2$}
    \end{subfigure}

    \caption{Ablation studies on MNIST with respect to learning rate $\eta$, data heterogeneity level $\alpha$, and the ECGR damping coefficient $\beta$. All curves report the mean test accuracy over five independent runs with random seeds 0, 1, 42, 999, and 2025.}
    \label{fig:mnist_ablation}
\end{figure}
Because the MNIST dataset is overly simple and the LeNet model is relatively small, the training dynamics become highly sensitive to the choice of learning rate. As a result, when the learning rate is set too low, the baselines fail to converge, as shown in \cref{fig:mnist_ablation}~(a). Under such circumstances, the ECGR strategy cannot provide valid improvements.

As shown in \cref{fig:mnist_ablation}~(b), the results are consistent with our findings on CIFAR-10 dataset, ECGR exhibits better performance under highly non-IID data settings.

As shown in \cref{fig:mnist_ablation}~(c), the results on MNIST follow the same trend observed on CIFAR-10: discarding exploratory gradients (i.e., $\beta = 0$) leads to noticeable accuracy oscillations and degradation during the early stage of training. 
However, due to the simplicity of the MNIST dataset and the small parameter size of the LeNet model—which together reduce the optimization difficulty and lessen the negative effects of losing gradient information—a setting of $\beta = 0$ unexpectedly yields improved final performance for most baselines (except FedProx-ECGR).

\subsubsection{Fashion-MNIST}

\begin{figure}[htbp]
    \centering
    \begin{subfigure}{0.32\textwidth}
        \centering
        \includegraphics[width=\textwidth]{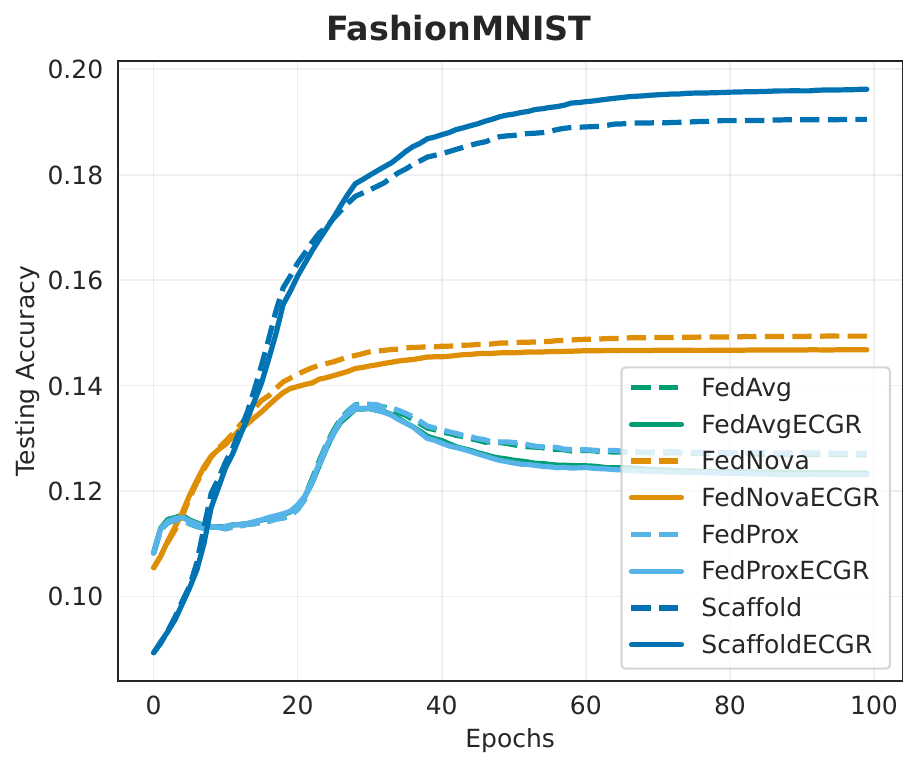}
        \caption{$\eta=0.0001$}
    \end{subfigure}
    \hfill
    \begin{subfigure}{0.32\textwidth}
        \centering
        \includegraphics[width=\textwidth]{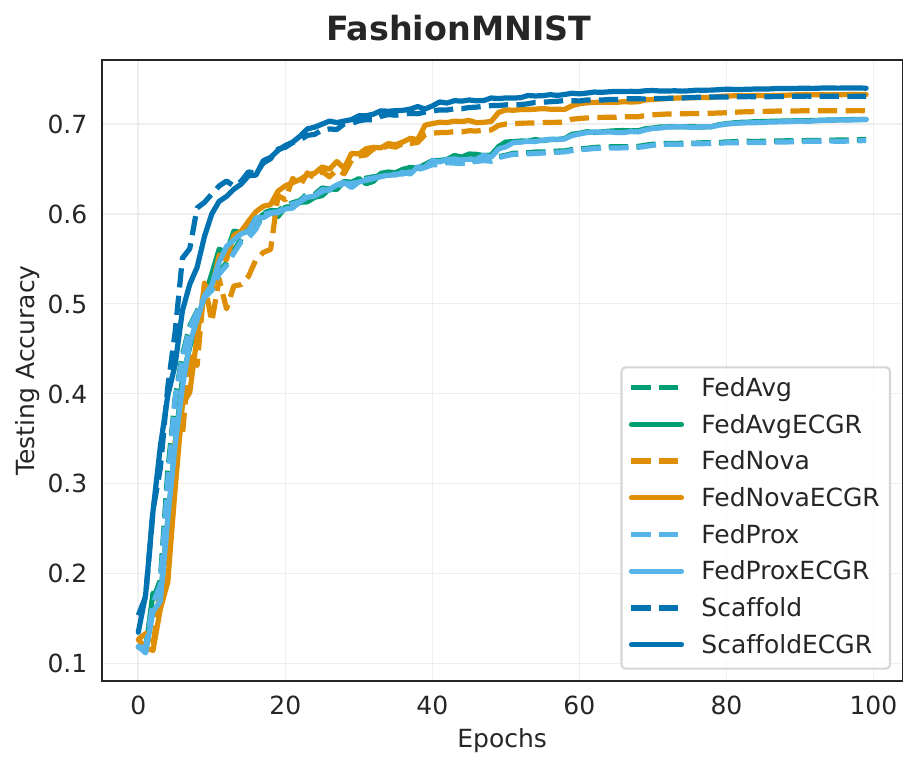}
        \caption{$\alpha=0.1$}
    \end{subfigure}
    \hfill
    \begin{subfigure}{0.32\textwidth}
        \centering
        \includegraphics[width=\textwidth]{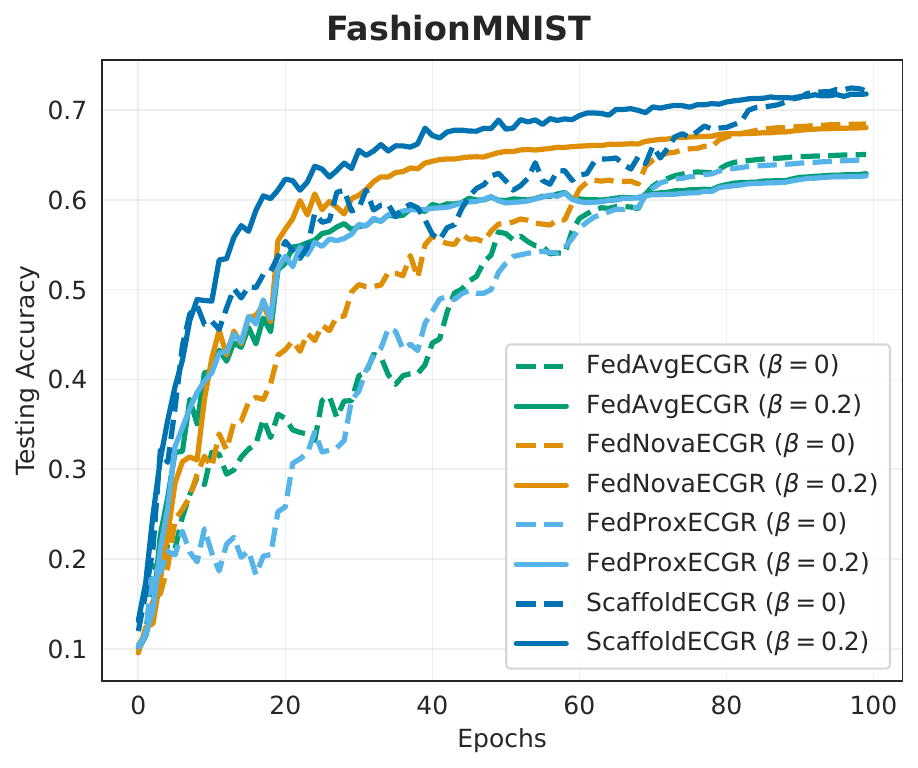}
        \caption{$\beta=0$ vs. $\beta=0.2$}
    \end{subfigure}

    \caption{Ablation studies on Fashion-MNIST with respect to learning rate $\eta$, data heterogeneity level $\alpha$, and the ECGR damping coefficient $\beta$. All curves report the mean test accuracy over five independent runs with random seeds 0, 1, 42, 999, and 2025.}
    \label{fig:fmnist_ablation}
\end{figure}

The analytical findings on Fashion-MNIST exhibit the same overall trends as those observed on MNIST.

\subsubsection{CIFAR-100}

\begin{figure}[htbp]
    \centering
    \begin{subfigure}{0.32\textwidth}
        \centering
        \includegraphics[width=\textwidth]{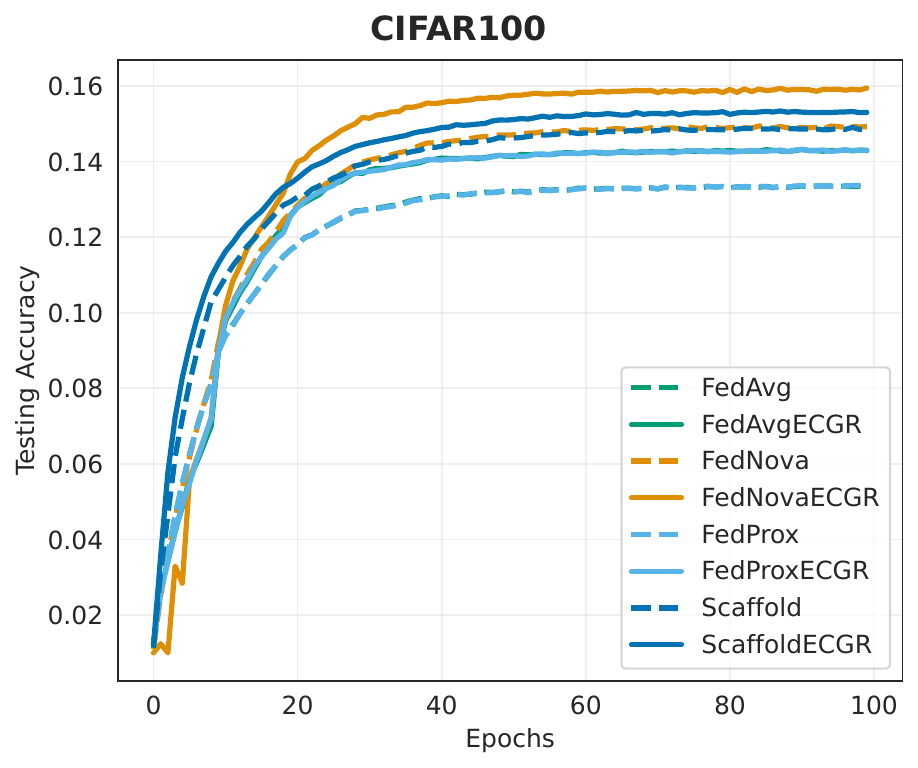}
        \caption{$\eta=0.0001$}
    \end{subfigure}
    \hfill
    \begin{subfigure}{0.32\textwidth}
        \centering
        \includegraphics[width=\textwidth]{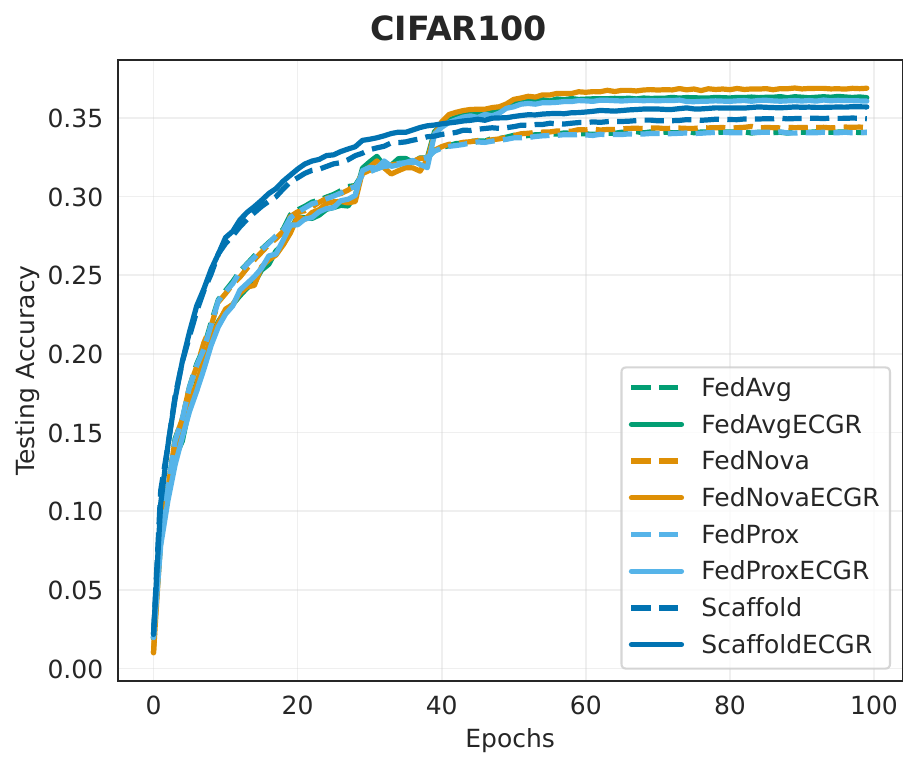}
        \caption{$\alpha=0.1$}
    \end{subfigure}
    \hfill
    \begin{subfigure}{0.32\textwidth}
        \centering
        \includegraphics[width=\textwidth]{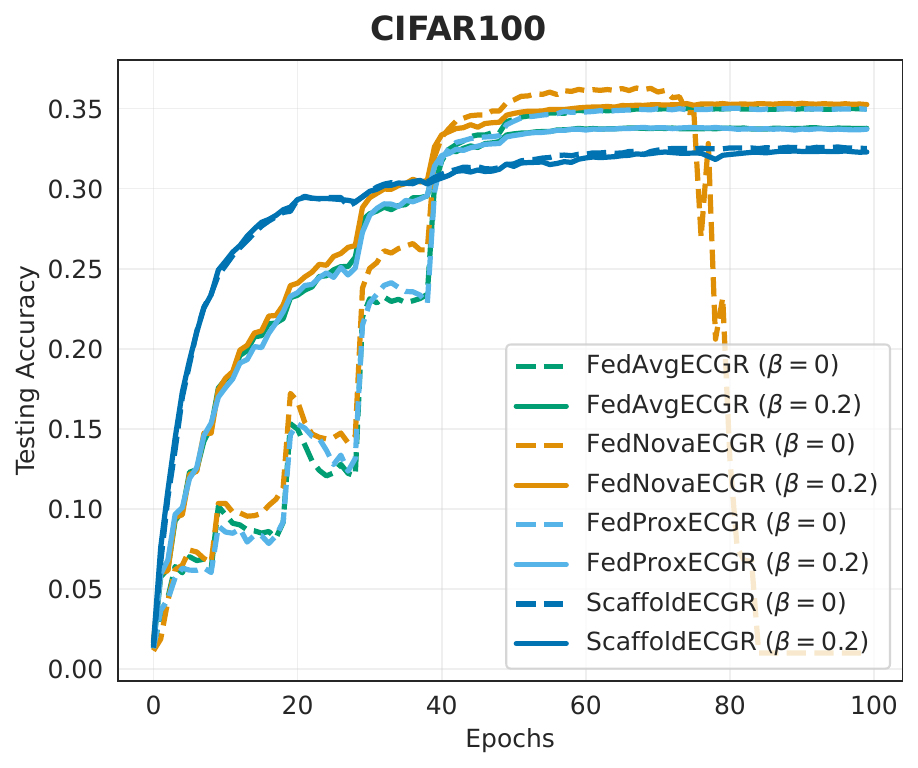}
        \caption{$\beta=0$ vs. $\beta=0.2$}
    \end{subfigure}

    \caption{Ablation studies on CIFAR-100 with respect to learning rate $\eta$, data heterogeneity level $\alpha$, and the ECGR damping coefficient $\beta$. All curves report the mean test accuracy over five independent runs with random seeds 0, 1, 42, 999, and 2025.}
    \label{fig:cifar100_ablation}
\end{figure}

The conclusions drawn from CIFAR-10 and CIFAR-100 are largely consistent, 
except for those related to the learning rate~$\eta$. Similar to MNIST and Fashion-MNIST, 
the upper bound of the average test accuracy on CIFAR-100 decreases substantially. 
However, unlike these simpler datasets, ECGR still provides a noticeable performance gain. 
This can be attributed to the higher complexity and richer semantic diversity of CIFAR-100, 
as well as the larger capacity of the CNN models employed, which make the distinction between exploratory and convergent gradient phases more pronounced and allow ECGR to better exploit this structure.

\subsection{Additional Visualization of Gradient Selection in ECGR}
In this section, we extend the gradient–selection visualizations presented in
\cref{sec:visualization} for CIFAR-10. We first provide the 3D views of the
selected gradients under the IID setting, followed by 3D visualizations
obtained with different random seeds. We then further present the 3D views on
additional datasets under the seed--42 setting.

\subsubsection{CIFAR-10}

\begin{figure}[htbp]
    \centering
    \begin{subfigure}{0.32\textwidth}
        \centering
        \includegraphics[width=\textwidth]{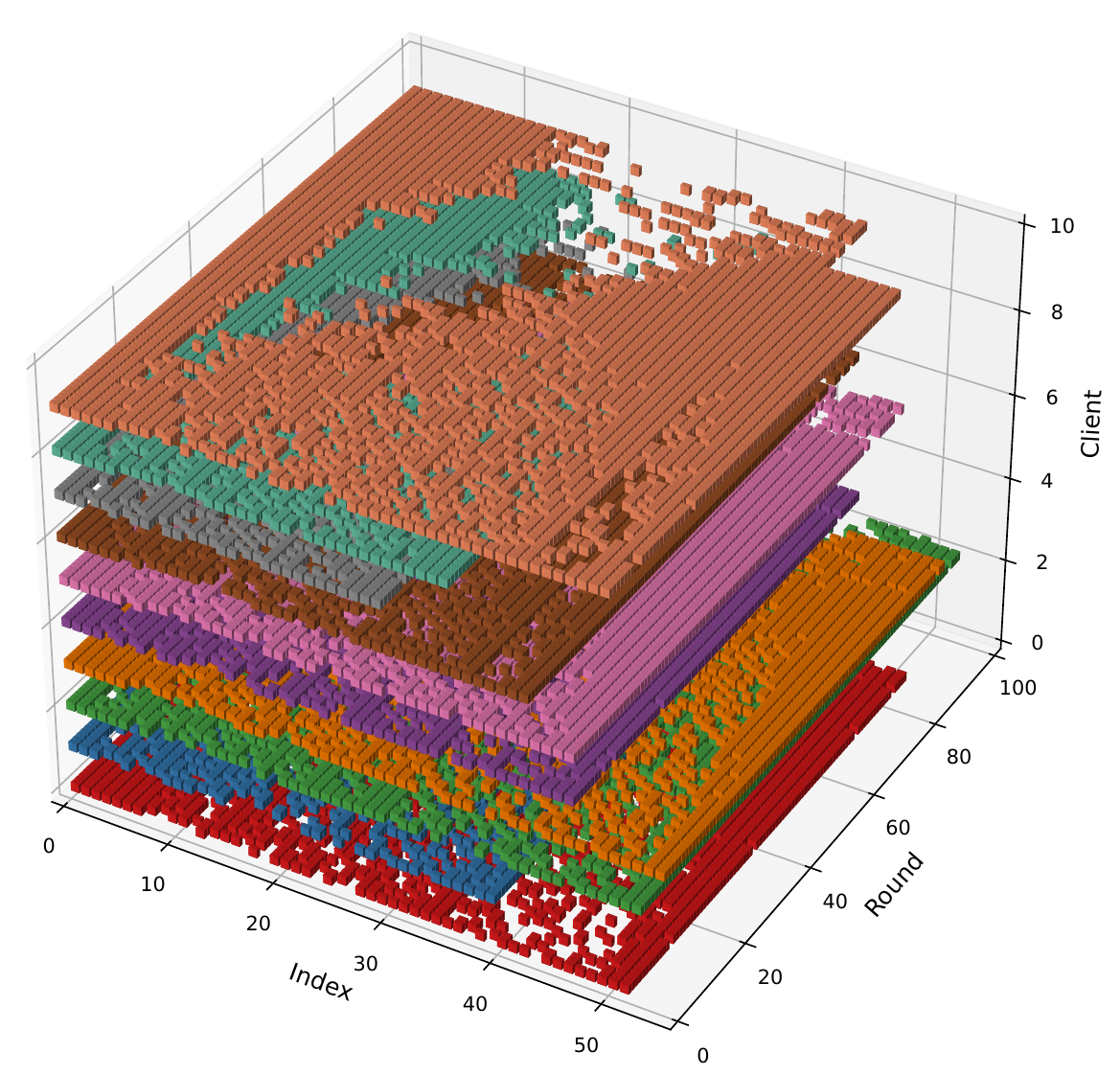}
        \caption{$\alpha=1$}
    \end{subfigure}
    \hfill
    \begin{subfigure}{0.32\textwidth}
        \centering
        \includegraphics[width=\textwidth]{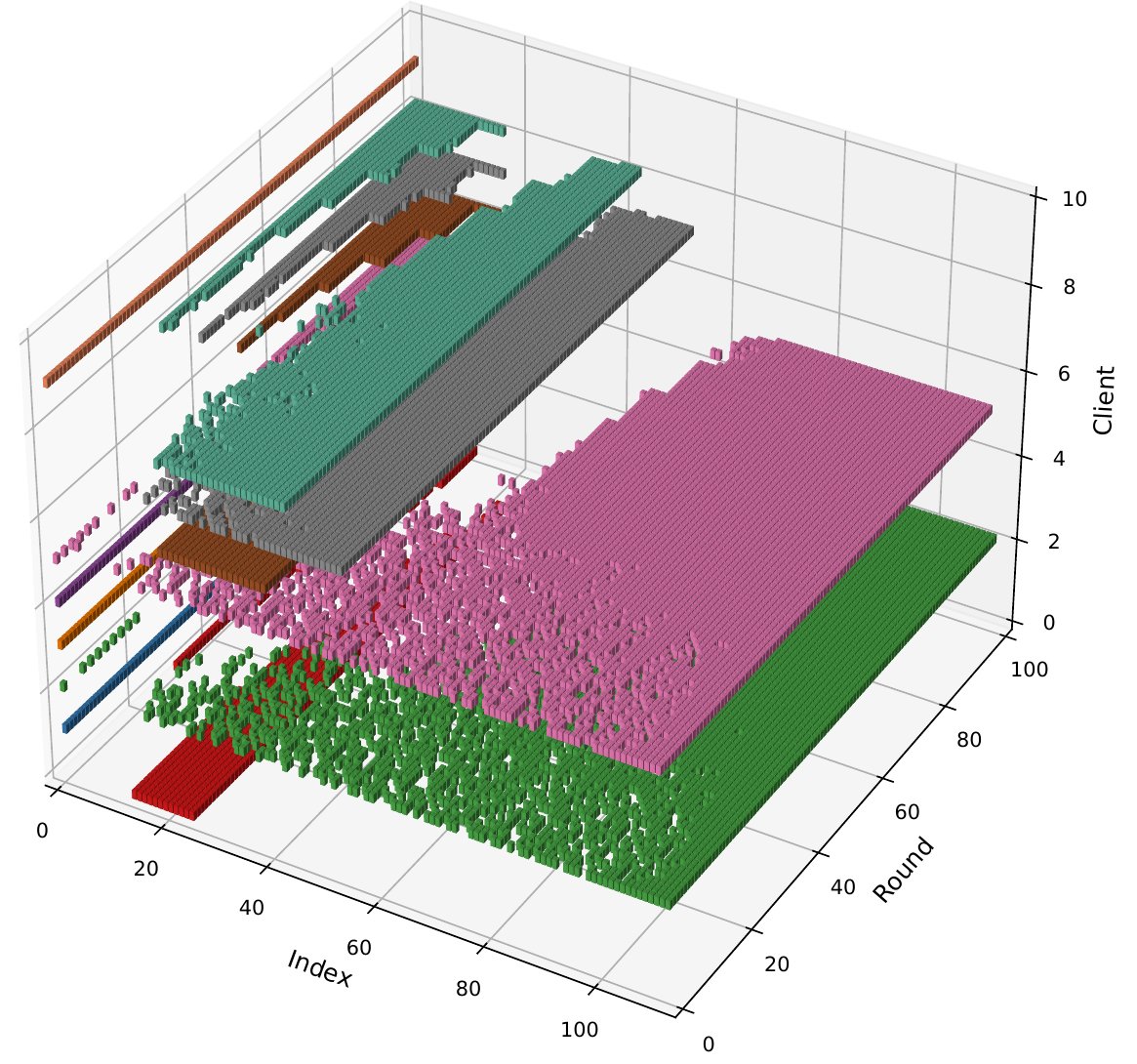}
        \caption{Seed 0}
    \end{subfigure}
    \hfill
    \begin{subfigure}{0.32\textwidth}
        \centering
        \includegraphics[width=\textwidth]{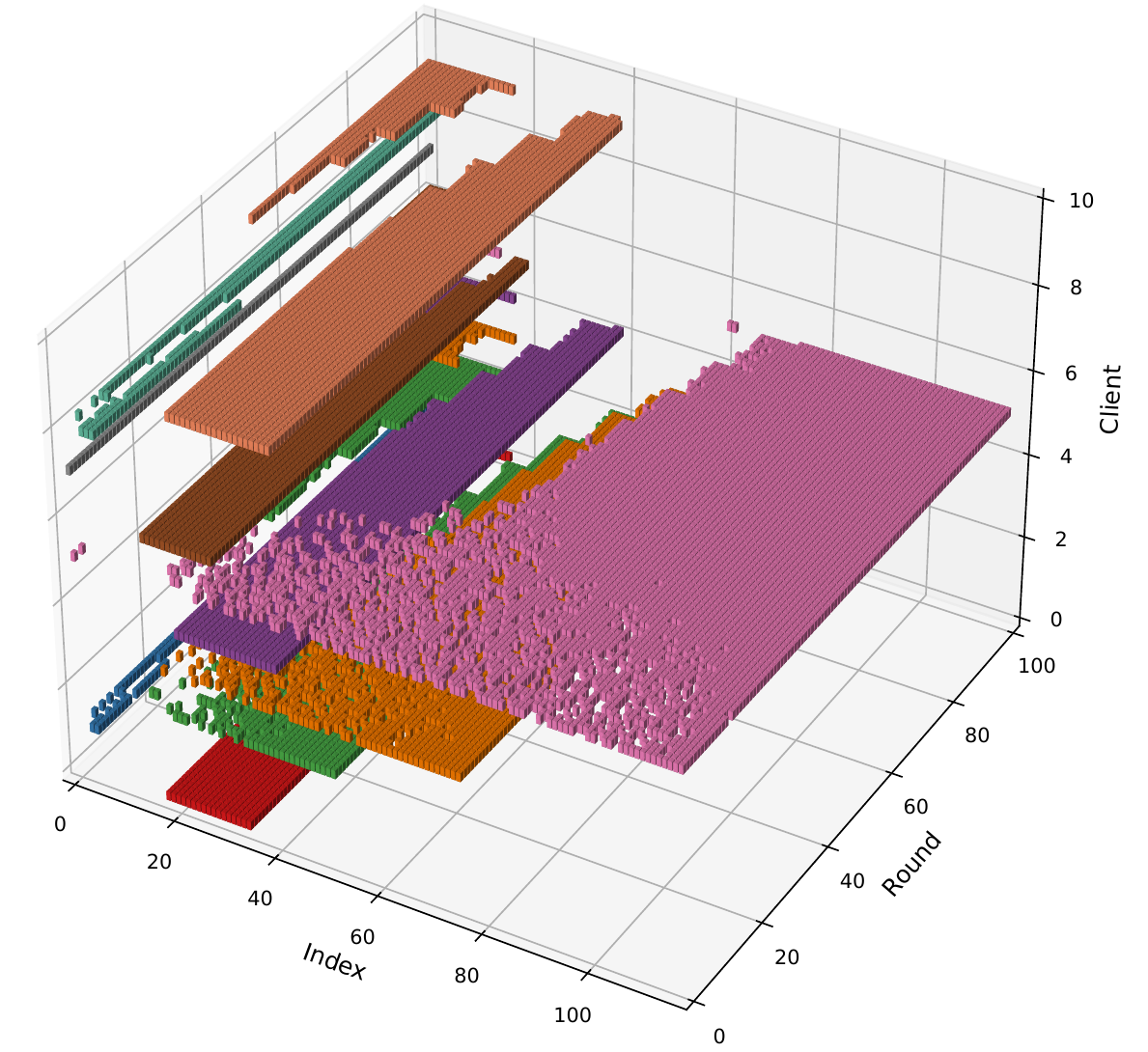}
        \caption{Seed 2025}
    \end{subfigure}

    \caption{Visualization of ECGR’s gradient selection under both IID (\(\alpha=1\), seed = 42) and non-IID (\(\alpha=0.01\), seeds = 0 and 2025) settings on CIFAR-10.}

    \label{fig:grad_sel_vision_extra}
\end{figure}
As shown in \cref{fig:grad_sel_vision_extra}~(a), in an almost IID setting (i.e., $\alpha = 1$), the gradient selections made by ECGR tend to resemble random choices. This is because, under IID data distribution, the gradients computed on each client are already close to the optimal gradient. Consequently, the discrepancy between exploratory and convergent gradients becomes small, leading to weaker distinguishability and more uniformly mixed selections.

The visualizations in \cref{fig:grad_sel_vision_extra}~(b) and (c) indicate that, under non-IID settings, the variation in client data distributions induced by different random seeds has only a minor impact on the gradient-selection behavior of ECGR. Across all seeds, ECGR consistently prefers gradients from later local iterations as the convergence-oriented gradients, which aligns with the classical convergence behavior of SGD.

\subsubsection{MNIST, Fashion-MNIST and CIFAR-100}

\begin{figure}[htbp]
    \centering
    \begin{subfigure}{0.32\textwidth}
        \centering
        \includegraphics[width=\textwidth]{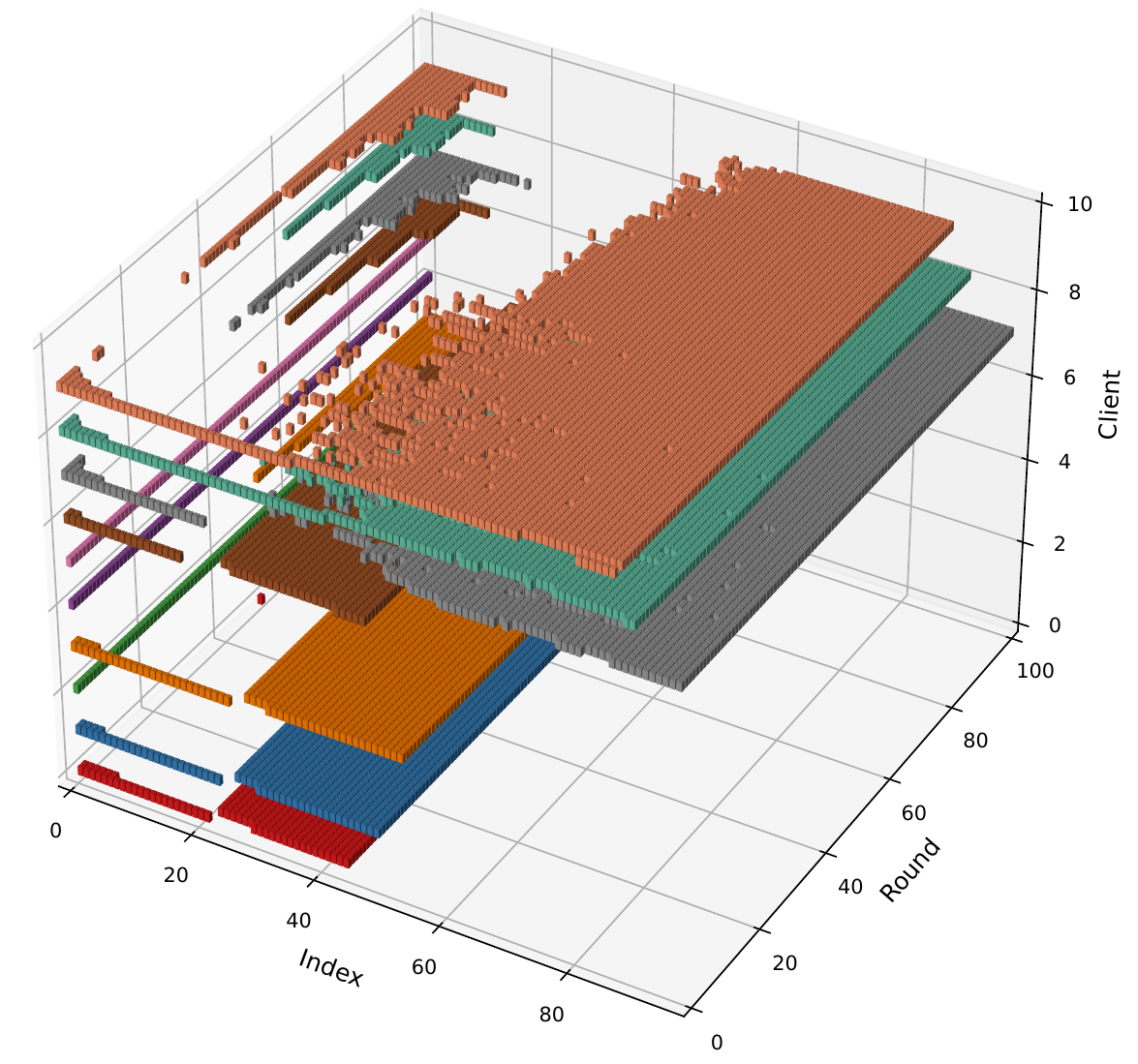}
        \caption{MNIST}
    \end{subfigure}
    \hfill
    \begin{subfigure}{0.32\textwidth}
        \centering
        \includegraphics[width=\textwidth]{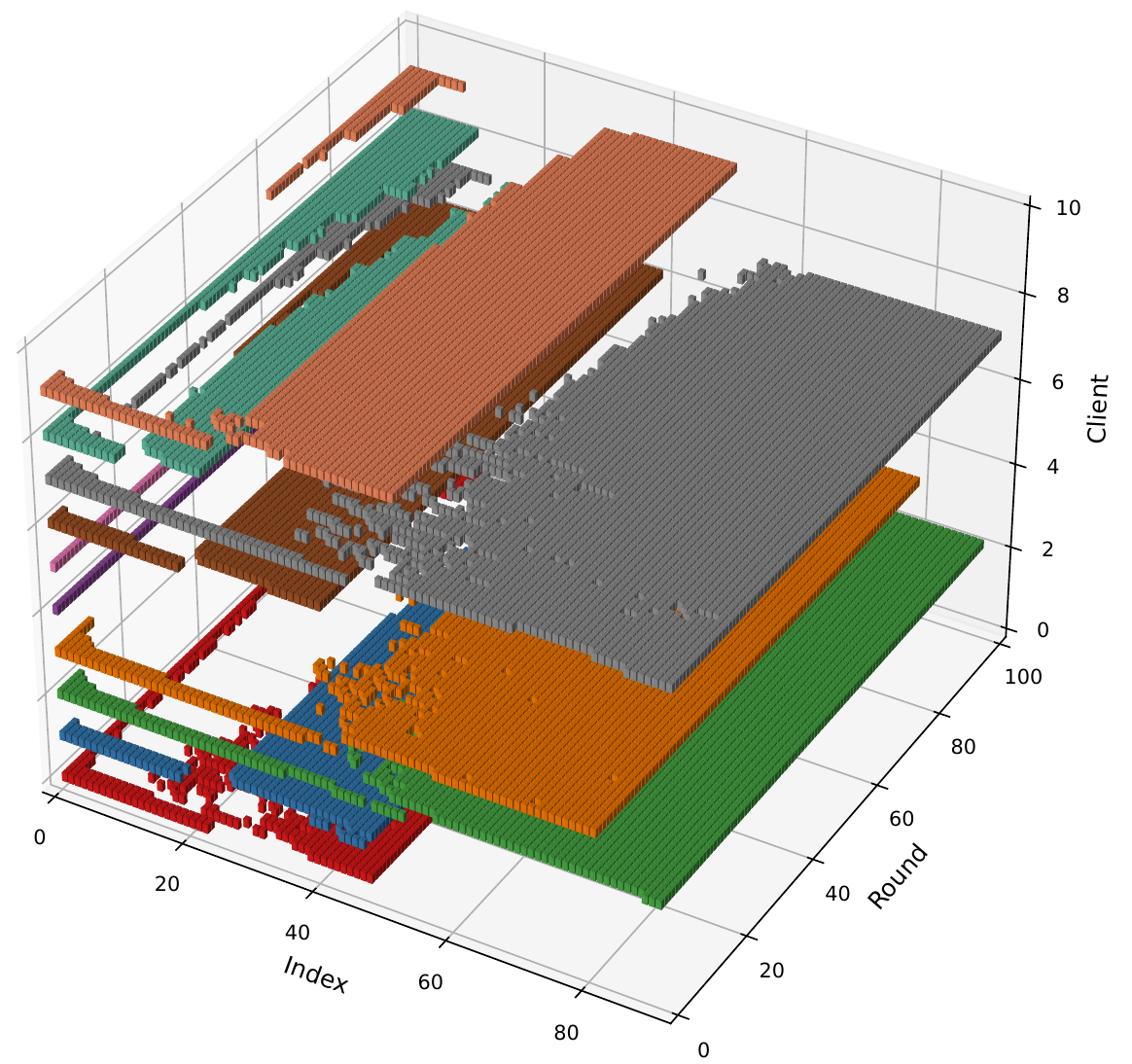}
        \caption{Fashion-MNIST}
    \end{subfigure}
    \hfill
    \begin{subfigure}{0.32\textwidth}
        \centering
        \includegraphics[width=\textwidth]{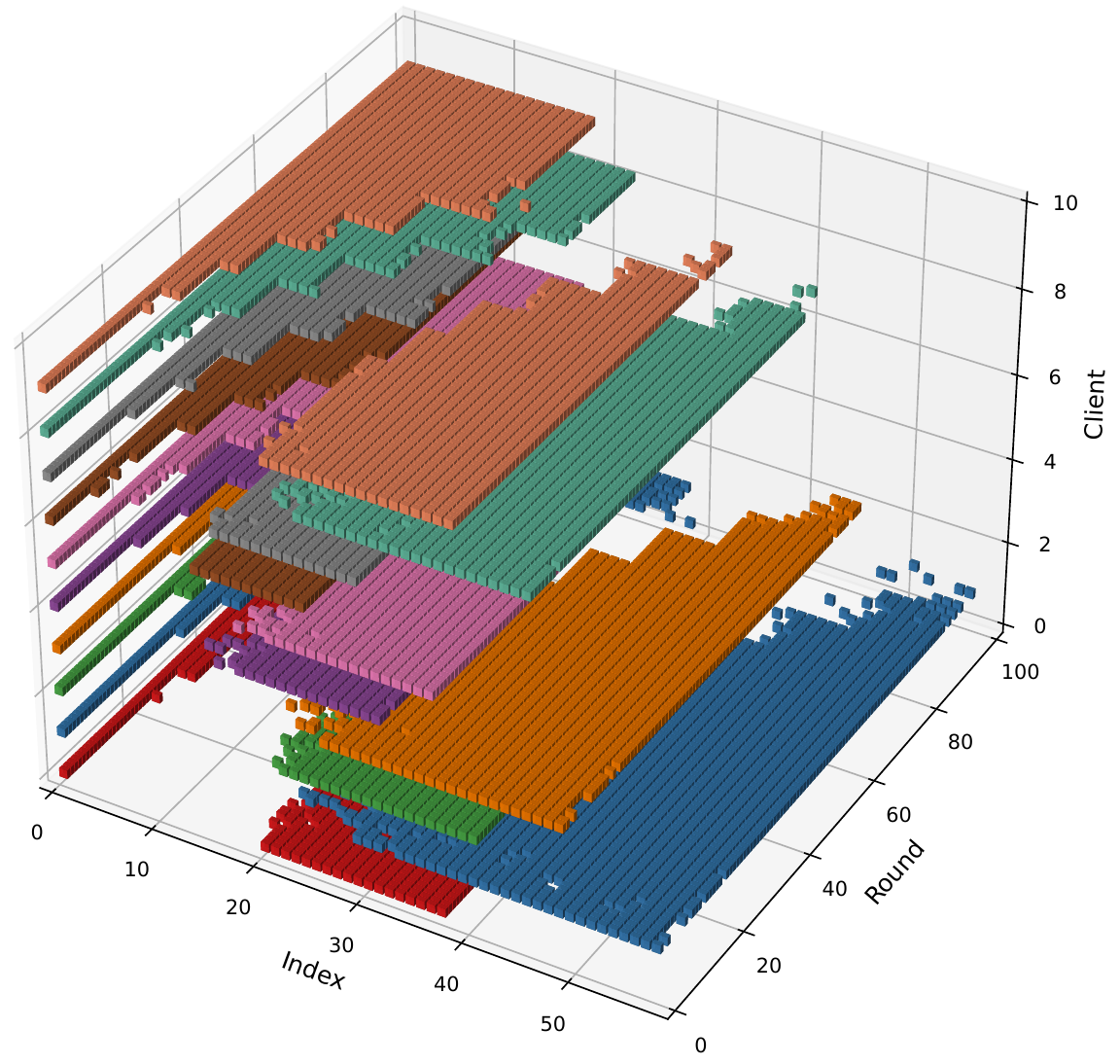}
        \caption{CIFAR-100}
    \end{subfigure}

    \caption{Visualization of ECGR's per-round gradient selection on three datasets---MNIST, Fashion-MNIST, and CIFAR-100. All visualizations are generated under the same experimental setting with Dirichlet heterogeneity parameter $\alpha = 0.01$ and random seed $42$.}
    \label{fig:grad_sel_vision_datasets}
\end{figure}

As shown in \cref{fig:grad_sel_vision_datasets}, the MNIST and Fashion-MNIST datasets exhibit gradient-selection behaviors consistent with those observed on CIFAR-10. However, for CIFAR-100, the convergence gradients selected by ECGR tend to correspond to later local iterations during the early stage of training, whereas in the later stage—when the global model approaches convergence—the selected convergence gradients shift toward earlier local iterations. A mild version of this phenomenon also appears in the other three datasets (MNIST, Fashion-MNIST, and CIFAR-10). 

This behavior can instead be explained by the observation that, in the later stages of training, the global model gradually approaches a convergent regime. As a consequence, the discriminative gap between exploratory and convergent gradients diminishes, causing the selected convergent gradients to shift toward earlier local iterations. This shift becomes more evident on CIFAR-100 due to its higher task complexity, which accelerates the onset of this near-convergence behavior.

\end{appendix}

\end{document}